%% file: iclr2027_conference.tex
\documentclass{article} 
\usepackage{iclr2027_conference,times}

\input{math_commands.tex}

\usepackage[hidelinks]{hyperref}
\usepackage{url}

\usepackage{graphicx}
\usepackage{float}
\usepackage{placeins}
\usepackage{booktabs}
\usepackage{multirow}
\usepackage{wrapfig}


\usepackage{array}
\usepackage{amsmath,amssymb,amsthm}
\usepackage{bbm}
\usepackage{xcolor}
\usepackage{subcaption}
\usepackage{microtype}
\usepackage{tikz}
\usetikzlibrary{patterns,decorations.pathreplacing,calc,positioning}
\input{figures/fig_loop_defs}
\newcommand{\xx}{$\times$}

\newtheorem{proposition}{Proposition}
\newtheorem{property}{Property}
\newtheorem{corollary}{Corollary}

\title{LEAP: Learning Efficient Action Proposals \\ For LLM Agents}

\author{
\parbox{\textwidth}{\centering
\textbf{Zhen Xu}\textsuperscript{1}\qquad
\textbf{Qizheng Zhang}\textsuperscript{2}\qquad
\textbf{Gerry Wan}\textsuperscript{2}\qquad
\textbf{Shang Zhu}\textsuperscript{3}\qquad
\textbf{Ce Zhang}\textsuperscript{1}
\\[0.6em]
{\normalfont\small
\textsuperscript{1}University of Chicago
\qquad
\textsuperscript{2}Stanford University
\qquad
\textsuperscript{3}Together AI
}
}
}

\iclrfinalcopy 

\newlength{\mainfigwidth}
\begin{document}

\maketitle
\lhead{Preprint}

\input{sec_main}
\clearpage

\subsection*{AI use statement}

Generative AI tools were used to assist with English-language editing for readability, development and refinement of analysis code and consistency checks.

\subsection*{Reproducibility statement}

All models and benchmarks used in this study are open sourced. Detailed experimental setups and results are provided in Section~\ref{sec:experiments} and the appendices. We plan to release the code and experimental artifacts on GitHub.

\bibliography{refs}
\bibliographystyle{iclr2027_conference}

\clearpage
\appendix
\input{sec_details}

\input{sec_appendix}

\end{document}

%% file: math_commands.tex
\usepackage{amsmath,amsfonts,bm}

\def\eqref#1{equation~\ref{#1}}

\def\1{\bm{1}}

\DeclareMathAlphabet{\mathsfit}{\encodingdefault}{\sfdefault}{m}{sl}
\SetMathAlphabet{\mathsfit}{bold}{\encodingdefault}{\sfdefault}{bx}{n}



%% file: figures/fig_loop_defs.tex
\definecolor{genfill}{RGB}{203,213,228}\definecolor{genline}{RGB}{78,98,132}
\definecolor{draftfill}{RGB}{240,226,200}\definecolor{draftline}{RGB}{168,128,70}
\definecolor{toolline}{RGB}{127,127,127}\definecolor{userfill}{RGB}{234,234,234}
\definecolor{cancelline}{RGB}{110,110,110}\definecolor{panelbg}{RGB}{245,245,245}
\definecolor{savedfill}{RGB}{214,238,218}\definecolor{savedline}{RGB}{35,120,70}
\definecolor{extrafill}{RGB}{249,216,211}\definecolor{extraline}{RGB}{180,45,35}
\tikzset{
  gen/.style={draw=genline, fill=genfill, line width=0.3pt},
  draftbar/.style={draw=draftline, fill=draftfill, line width=0.3pt},
  toolbar/.style={draw=genline, pattern=north east lines, pattern color=toolline, line width=0.3pt},
  cancelbar/.style={draw=cancelline, dashed, fill=none, line width=0.4pt},
  userbar/.style={draw=genline, fill=userfill, line width=0.3pt},
  tlab/.style={font=\scriptsize, inner sep=0.5pt},
  rowlab/.style={font=\footnotesize, anchor=east, inner sep=2pt},
  note/.style={font=\scriptsize, text=black!75, inner sep=1pt},
  ptag/.style={font=\scriptsize\bfseries, anchor=north west, inner sep=1pt},
  savedband/.style={fill=savedfill, fill opacity=0.75, draw=none},
  extraband/.style={fill=extrafill, fill opacity=0.75, draw=none},
  fbox/.style={draw, rounded corners=2pt, align=left, inner sep=3.5pt, font=\footnotesize, line width=0.4pt},
  draftbox/.style={fbox, draw=draftline, fill=draftfill},
  targetbox/.style={fbox, draw=genline, fill=genfill},
  plainbox/.style={fbox, draw=black!60, fill=black!4},
  flow/.style={->, >=stealth, line width=0.5pt, draw=black!70},
  flowlab/.style={font=\scriptsize, text=black!75, inner sep=1.5pt, fill=panelbg},
}
\newcommand{\loopband}[5]{\fill[#3] ({#1},-0.02) rectangle ({#2},2.41);
  \draw[#4, line width=0.5pt] ({#1},-0.02) -- ({#1},2.41); \draw[#4, line width=0.5pt] ({#2},-0.02) -- ({#2},2.41);
  \node[font=\footnotesize\bfseries, text=#4, anchor=south] at ({(#1+#2)/2},2.44) {#5};}
\newcommand{\loopbar}[6]{\draw[#4] ({#1},{#3}) rectangle ({#2},{#3+#5});\ifx&#6&\else\node[tlab] at ({(#1+#2)/2},{#3+#5/2}) {#6};\fi}
\newcommand{\loopframe}[1]{%
  \useasboundingbox (-1.75,-0.3) rectangle (11.4,2.8);
  \draw[black!35, line width=0.3pt, ->] (0,-0.05) -- (11.2,-0.05); \node[note, anchor=north east] at (11.3,-0.08) {time};
  \looptagnode{#1}}
\newcommand{\looptagnode}[1]{\node[ptag] at (-1.75,2.95) {#1};}
\newcommand{\looptagA}[1]{\node[font=\bfseries\scriptsize, anchor=south west] at ([yshift=2pt]s.north west) {#1};}

%% file: sec_main.tex
\begin{abstract}
LLM agents are known to be slow in rollouts. An agent completes a task one step at a time. At each step, it reasons and then chooses an action to execute. The next step and action cannot start until the previous one has finished.  
Speculative decoding accelerates the rollouts at the reason phase by drafting and verifying the inference tokens.
Recent works have also started to apply similar ideas at the action phase. These works use off-the-shelf models, usually large, to draft action proposals for target model to verify.
Large drafters match the target more often but take longer to propose, while small off-the-shelf models are fast but rarely make the same decision as the target.
We ask a more general question: what determines the end-to-end speedup of action speculation?
To answer it, we develop a latency framework for the speculative round. The framework compares what a round gains with what it costs. The gain depends on how well the drafter predicts the target and on how many steps the task can take before it ends. The cost comes from drafting, from waiting for target verification and from executing tools.
Guided by the framework, we introduce \textbf{LEAP} (\textbf{L}earning \textbf{E}fficient \textbf{A}ction \textbf{P}roposals) which keeps the drafter small and makes it accurate by training it on the target actions sequences.
With a small 0.6B model, LEAP agrees with the target on most decisions and makes agents up to 60\% faster in end-to-end wall clock time, with no systematic change in task success.
Across various datasets, target models and draft models, the framework accounts for most of the measured speedups.
We also show the draft model can be online trained with no prior trace collection and match the performance of offline training, making LEAP practical to deploy in the real world.
\end{abstract}

\section{Introduction}
An LLM agent solves a task through successive decisions, such as selecting a tool and supplying its
arguments. Each decision requires a full decode, often including reasoning, and later
decisions depend on earlier actions and observations. For our 32B target on one H100, a
$\tau^2$-bench task takes about three minutes, nearly all of it generation. Improving the latency of
this sequential loop matters even when its individual calls already use an efficient serving engine.

Speculative decoding reduces generation latency within a step. A cheap draft model proposes tokens
for an expensive target to verify~\citep{leviathan2023,chen2023}. Recent agent systems apply a similar
idea to whole actions. Their proposals come from off-the-shelf models, cheaper prompting configurations,
or the target itself~\citep{isp,dsp,specactions,spork,swyr}. These choices expose a tradeoff.
An untrained small model is cheap but may rarely agree with the target. A large self-draft agrees
more often but can spend much of the time that speculation was meant to save.

We ask what determines the end-to-end speedup of an action-speculation round. The draft proposes
up to $k$ actions, and the target verifies the resulting contexts concurrently. The round commits
the matching prefix and a target-generated correction or bonus unless the task ends.
Our latency framework compares the committed steps gained with the cost of proposing, waiting for
verification and executing tools. Efficient proposals must be likely to be accepted and cheap relative
to a target step. Acceptance alone does not determine speedup: a larger draft or the target itself
can be more accurate yet slower. Task endings further limit how many steps a round
can commit.

Guided by this framework, \textbf{LEAP} (Learning Efficient Action Proposals) keeps the draft small
and trains it to predict the target's canonical actions. Every target-authored commit supplies a
training pair, so supervision is a by-product of running the agent. Fine-tuning a LoRA adapter without
reproducing the reasoning text raises a 0.6B draft's agreement with Qwen3-32B from 2--25\% to 64--75\%
across our benchmarks (Table~\ref{tab:offline_main}). The draft proposes, the target decides, and both
share one GPU during inference.

The paper makes three contributions. First, the latency framework relates speedup to round gains
and costs. With measured hit rates and draft costs, its predictions are within 0.10 of measurement
on 19 of 21 configurations. Predictions using offline hit rates meet that threshold on 12 of 18
configurations; some use measured draft latency. Second, LEAP trains a tiny draft from the target's
own actions, either offline or online. Online inference starts near sequential per-step speed and
approaches the offline draft without a prior collection pass. Third, we evaluate four benchmarks,
two target families and three draft sizes on a shared GPU. The trained 0.6B gives 1.10--1.63\xx{}
whole-task speedup, up to about 60\% faster. Across 16 tool-benchmark comparisons, official task
success increases in eight, decreases in five and is unchanged in three, with differences of at
most three tasks. Planning termination counts are reported separately. Every committed action
passes the target-authorship audit. Figure~\ref{fig:loop} summarizes training and execution.

\begin{figure}[tbp]
\centering
\resizebox{\mainfigwidth}{!}{\input{figures/fig_loop_compact}}
\caption{\textbf{LEAP trains and uses an action draft.} Offline target traces or online commits supply
the training pairs. The adapter proposes actions; the target verifies their contexts concurrently.
In this $k=2$ example, one proposal matches and the target corrects the next. The remaining
verification is cancelled, and the two committed actions can become further training data.}
\label{fig:loop}
\end{figure}

\section{Related work}
\label{sec:related}
\paragraph{Token-level speculation.}
Speculative decoding preserves the target's token distribution while verifying draft tokens in
parallel~\citep{leviathan2023,chen2023}. Drafts may be separate models or prediction heads such as
Medusa and EAGLE~\citep{medusa,eagle}; DistillSpec and domain-adapted drafts train for higher
acceptance~\citep{distillspec,domaindraft}. Online speculative decoding updates a token draft using
deployment queries~\citep{osd}. OoO-Spec applies speculative decoding to tool calls~\citep{ooospec}. LEAP instead imitates canonical agent actions. 

\paragraph{Action and tool speculation.}
Interactive and Dynamic Speculative Planning use cheaper prompting configurations~\citep{isp,dsp};
Speculative Actions studies action-level verification and depth~\citep{specactions}; other systems
use untrained small models or modified target configurations~\citep{smc,spagent,dualspec}.
SPORK forks a probe from the target's state, while Speculate While You Reason trains a
self-speculation head jointly with the agent~\citep{spork,swyr}. We train a separate small draft with
the target frozen and measure both models together on one GPU.
Tool-latency methods launch predicted calls while generation continues~\citep{stc,paste,swm,aospec}.
They address a different cost from the sequential target-generation waits reduced by LEAP; our
proposals still pay tool-execution time. Additional comparisons are in App.~\ref{app:related}.

\section{Designing efficient action proposals}
\subsection{Propose, verify, commit}
\label{sec:problem}\label{sec:loop}
The context $s_i$ contains the instruction, tool schemas, and committed actions and
observations. The target greedily generates a canonical action $a_i=T(s_i)$; the draft predicts it
without reasoning. Actions must match after parsing and canonicalization, including tool names
and arguments; invalid proposals never match. Each synchronous round follows the propose--verify--commit pattern~\citep{specactions}
(Fig.~\ref{fig:loop}):
\begin{enumerate}
\item \textbf{Propose.} Generate up to $k$ actions, conditioned on earlier proposals and their
observations. Proposed tools execute in a copy of the environment.
\item \textbf{Verify.} The target generates actions on the resulting contexts, up to $k+1$,
concurrently on one shared GPU.
\item \textbf{Commit.} Consume verifications in prefix order. Commit the $A$ leading matches and
the target's correction or bonus, cancelling unused requests. Adopt accepted environment states
without executing their tools again.
\end{enumerate}
A round commits $A+1$ actions unless its accepted prefix completes the task, which leaves no
bonus. Proposing finishes before verification.

\begin{property}[target authorship]
\label{prop:exact}
Every committed action equals the target's verification output on the context in which that action
is committed.
\end{property}
The guarantee follows from matching before commit. Concurrent decoding can still change the target's
decisions, so trajectory identity and task outcomes are evaluated separately (App.~\ref{app:success}).

\subsection{A latency framework}
\label{sec:model}
The framework compares the steps gained with the time spent in a round, for any draft used in this loop.
Let $q_j=\Pr[A\ge j]$ include rounds that end before position $j$, and let $e$ be the probability that the
accepted prefix completes the task, which removes the bonus. The mean committed steps per round are
$c=1+\sum_{j=1}^{k}q_j-e$.

Let $t_{\mathrm{gen}}$ be the mean sequential target-generation time. Write $\beta t_{\mathrm{gen}}$
for the mean longest verification time actually awaited, and $D$ for mean proposing time per round
divided by $t_{\mathrm{gen}}$, including proposal-tool execution. With relative tool cost
$\tau=t_{\mathrm{tool}}/t_{\mathrm{gen}}$, let $f$ be the fraction of sequential steps executing a tool
and $g$ the mean number of target-executed tool steps per round. The speedup is approximately
\begin{equation}
 S\simeq\frac{c(1+f\tau)}{\beta+D+g\tau}
 \quad\xrightarrow{\;\tau\to0\;}\quad\frac{c}{\beta+D}.
 \label{eq:main:S}
\end{equation}
Accepted proposal tools are included in $D$ only once. This model assumes comparable work between
arms, small bookkeeping costs and cancellation of discarded verifications.

Efficient proposals are therefore likely to be accepted and cheap relative to a target step.
Pooled acceptance alone does not determine speedup: it omits rounds with no usable proposal and
does not encode task endings or proposing costs. Increasing depth can raise both commits
and round cost; the bounds and depth condition are in App.~\ref{app:model}.

\paragraph{Estimating the inputs.}
Estimate $c$ by evaluating the draft on held-out sequential target contexts and replaying the acceptance
rule and task endings. Estimate $\beta$ by replaying sequential timings with estimated hits and taking
each round's maximum awaited step time. Estimate $D$ from single-stream draft-latency
probes and the expected proposal count, plus proposal-tool costs (App.~\ref{app:predictor}).
Concurrency-dependent changes in target outputs and timing remain sources of prediction error.

\subsection{LEAP: training the draft}
\label{sec:train}
Guided by the framework, LEAP keeps the draft small and trains it on context--action pairs generated
by the frozen target. We fine-tune a LoRA adapter with cross-entropy on canonical action tokens,
discarding reasoning text. No target logits or human action labels are needed. The draft decodes
without reasoning and uses schema constraints on tool benchmarks; training settings are in
App.~\ref{app:detail:train}.

Only proposals conditioned on a fully matched prefix can be committed (Proposition~\ref{prop:commit}),
motivating supervised imitation on target traces. Property~\ref{prop:exact} also makes deployed
commits a source of online training pairs, without a separate collection pass.

\section{Experiments}
\label{sec:experiments}
\subsection{Setup and main results}
\label{sec:setup}\label{sec:results4}
We evaluate Qwen3-32B and Gemma-4-31B-it, each with trained Qwen3 drafts of 0.6B, 1.7B and 4B.
The whole pipeline is repeated for each target: its own sequential traces supply the training data.
Target and draft share one H100 during inference. Serving configurations are held fixed across
sizes within each benchmark; the Gemma target uses FP8 weights (App.~\ref{app:serving}).

We use four benchmarks. OpenAGI and TaskBench generate plans without executing tools~\citep{openagi,taskbench}.
$\tau^2$-bench uses the telecom domain in solo mode, with tools executed against its local backend;
BFCL uses multi-turn tasks and cloned tool environments~\citep{tau2bench,bfcl}.
Train/test splits are task-disjoint. Each task runs sequentially and speculatively on
the same service. We report the ratio of total sequential to speculative wall-clock. The main
speedup tables show the estimate $\pm$ one standard error (SE), computed as the standard deviation
of 10,000 task-level bootstrap ratios. Sequential and speculative times stay paired, and repeated
runs of a task remain together when resampled. Full 95\% intervals are retained in the appendix.

\input{tab_main_results}

Table~\ref{tab:main_results} covers all three sizes and both targets. The trained 0.6B yields
1.18--1.61\xx{} with Qwen3 and 1.10--1.63\xx{} with Gemma. The benchmarks differ in the terms of
Eq.~\ref{eq:main:S}. OpenAGI accepts many proposals but its short tasks frequently remove the bonus.
$\tau^2$-bench has longer tasks and a cheap draft relative to its target generation time. BFCL
tasks are multi-turn conversations; a round cannot cross the end of a turn, and turns average only
2.6 steps, limiting commits per round.
Changing the target also changes the relative cost: Gemma's planning steps are longer, while its
tool-benchmark steps are shorter, so the same draft consumes a different fraction of a target step.

Every committed action passes the target-authorship audit. Across 16 comparisons on $\tau^2$-bench
and BFCL, official task success increases in eight, decreases in five, and is unchanged in three.
Differences are at most three tasks per comparison. The 12 planning-benchmark comparisons count
normal termination: three decrease and nine are unchanged (App.~\ref{app:success}). Trajectories
can still diverge. We report these outcomes separately from the per-action guarantee and retain
all tasks in the wall-clock estimator.

\subsection{Prediction: agreement with the target}
\label{sec:agreement}
We first examine how accurately a small draft predicts the target, which determines the
accepted-prefix contribution to a round's gain.
Table~\ref{tab:offline_main} compares draft agreement under a common evaluation setup. Training
substantially improves the small base. A reasoning-free target is a strong predictor on planning
tasks, but on $\tau^2$-bench it agrees on 42.5\% of contexts versus 63.7\% for the trained 0.6B.
Disabling reasoning can change the decision, so the target's size alone does not guarantee a better
action prediction. The reasoning-on target is also shown as a self-consistency reference.

\input{tab_main_agreement}

The SEs in Table~\ref{tab:offline_main} resample whole tasks and retain every context and repeated
prediction of each sampled task. Each replicate recomputes the context-weighted agreement, accounting
for dependence between decisions from the same trajectory. These SEs quantify task-sampling uncertainty
conditional on the evaluated models and runs; they do not quantify variation across training seeds.
BFCL has 99 contexts from ten tasks; its trained-draft SE is 6.0 percentage points. The $\tau^2$-bench
reasoning-on reference uses the per-context mode across three passes.

Task endings limit how often later proposals can be used (Sec.~\ref{sec:model}).
On OpenAGI, the second and third proposals are made in only 81\% and 62\% of rounds.
Among rounds that reach a position with the earlier proposals accepted, acceptance is about the
same as or higher than at the first position on every benchmark (Table~\ref{tab:agree}).

\paragraph{Supervision and objective.}
Table~\ref{tab:objective} tests learning the target's actions against benchmark gold supervision and
GRPO from the SFT draft. Gold-label training gives lower agreement on TaskBench and BFCL. The
comparison changes the training contexts as well as the labels: gold contexts follow gold actions,
and BFCL's target can group calls. The comparison therefore assesses the complete supervision source.
GRPO gives a small TaskBench gain and no BFCL gain, at substantially more
training cost; no variant exceeds the three-point agreement gain used to trigger an end-to-end run.

\input{tab_main_objective}

\paragraph{Data and generalization.}
The OpenAGI recipe study separates the 40 test tasks whose action sequences also occur in training
from 23 with unseen sequences (Table~\ref{tab:recipe}). The default adapter obtains 82.2\% agreement
on the former and 64.5\% on the latter. Selecting epoch 7 by validation exact match raises overall
agreement from 75.0\% to 77.1\%, but the unseen score is 63.2\%. Full fine-tuning at epoch 7 gives
75.2\% overall, below the rank-64 adapter. Additional capacity or epochs therefore improve neither
subset uniformly in this study.

Using the same epoch-7 recipe with half or one quarter of the training pairs gives 72.0\% and
71.5\% overall agreement, versus 77.1\% with the full data. A few hundred pairs already supply
useful supervision, although performance depends on action-sequence overlap and the unseen score
is not monotone in data quantity. Cross-benchmark transfer is also limited: the TaskBench draft
reaches 23.3\% on OpenAGI and the OpenAGI draft 29.6\% on TaskBench, far below their in-benchmark
rates of about 75\% (Table~\ref{tab:transfer}). Most other transfers stay near the untrained base.
These results motivate workload-specific adapters and continued collection of the target's actions.
We retain action-only input because the target's reasoning is unavailable for later speculative
positions when the draft proposes them; full input and recipe studies are in App.~\ref{app:training}.

\subsection{Cost: when a more accurate draft is slower}
\label{sec:selfspec}\label{sec:training}
We next test whether extra agreement offsets the cost of proposing. The framework compares
the resulting gain in committed steps with the change in round time.

Figure~\ref{fig:curve} shows measured acceptance and drafting cost for the trained and untrained
0.6B drafts and the reasoning-disabled target. The untrained TaskBench draft produces no usable
proposal in 38\% of rounds; these rounds cost drafting time without entering the acceptance ratio
(Sec.~\ref{sec:model}).

\begin{figure}[tbp]
\centering
\includegraphics[width=\mainfigwidth]{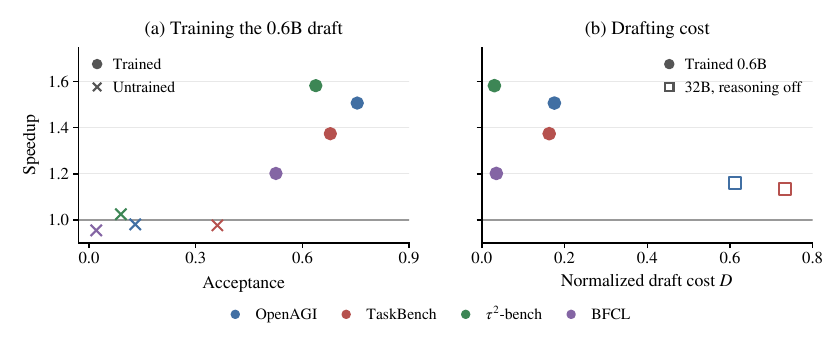}
\caption{\textbf{Measured acceptance and drafting cost, Qwen3 target.} (a) Trained and untrained
0.6B controls. Acceptance is the share of compared proposals that match the target. (b) Trained
0.6B on all four benchmarks, plus the reasoning-disabled target on the two planning benchmarks
where this control was run; $D$ is proposing time per round normalized by a target step. All points use the control runs of Table~\ref{tab:model}.}
\label{fig:curve}
\end{figure}

The planning controls use the target with reasoning disabled as the proposer in our same
propose--verify--commit loop. Their live acceptance is comparable to or above the trained draft's,
yet Qwen3 self-drafting gives only 1.16\xx{} and 1.13\xx{} on OpenAGI and TaskBench
(Table~\ref{tab:model}). Each proposal costs $3.3$--$4.0\times$ as much, raising the normalized draft
term from 0.16--0.18 to 0.61--0.73. With Gemma's longer planning steps, self-drafting instead reaches
1.48\xx{} and 1.54\xx{}: drafting cost is smaller relative to a target step. These measurements
illustrate why acceptance and relative cost must be considered together.

\paragraph{Draft size.}
Figure~\ref{fig:size} shows the draft-size ablation. On Qwen3 planning tasks, agreement generally
increases with size, but a 4B draft is slower than 1.7B and returns to about the 0.6B speedup. Its
per-round cost grows from 0.16--0.17 to 0.28--0.29 target steps, offsetting the extra commits.
On BFCL, greater agreement carries through to higher speedup because drafting is cheap relative to
the target step. On $\tau^2$-bench the three sizes' 95\% intervals overlap (Table~\ref{tab:four}). Gemma changes this balance
again: its shorter tool-benchmark generations increase the relative draft cost, and its 0.6B draft
is fastest on $\tau^2$-bench (Table~\ref{tab:main_results}).

\begin{figure}[tbp]
\centering
\includegraphics[width=\mainfigwidth]{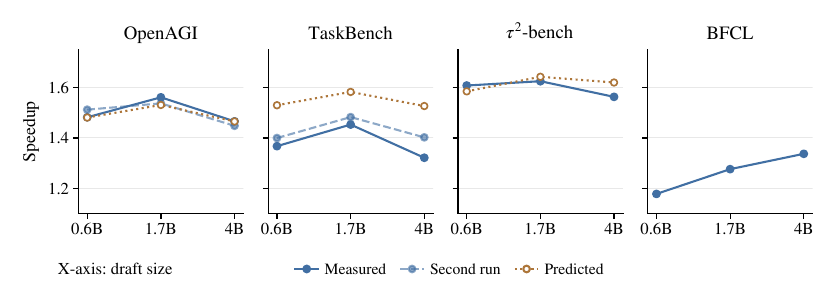}
\caption{\textbf{Draft-size ablation, Qwen3 target.} Measured speedup, independent planning repeats,
and the predictions used in Fig.~\ref{fig:pred}b. BFCL has one measured size study and no offline-hit
prediction. Agreement and cost values are in Table~\ref{tab:size}; pooled results are in
Table~\ref{tab:main_results}.}
\label{fig:size}
\end{figure}

The cost decomposition makes this comparison quantitative. In the generation-dominated regime,
the model compares two draft configurations through
\begin{equation}
 \frac{S_B}{S_A}\simeq
 \frac{c_B/c_A}{(\beta_B+D_B)/(\beta_A+D_A)}. \label{eq:main:size_ratio}
\end{equation}
For the measured Qwen3 components, moving from 0.6B to 4B raises commits per round by a factor
of 1.09--1.10 on the planning benchmarks but raises round cost by 1.12, explaining the loss in
speedup. On BFCL the corresponding factors are 1.16 and 1.03, so the larger draft pays off.
On $\tau^2$-bench both ratios are within 0.02 of one. These are explanations using measured
components; predictive comparisons use estimated components in the same ratio. Their resolution
depends on the relative input errors, particularly when expected speedups differ by only a few
hundredths. The full confidence intervals and repeated planning runs provide the relevant empirical
context for these point-estimate comparisons.

The 0.6B draft is within 0.1\xx{} of the fastest size in seven of eight benchmark--target pairs.
For practitioners, we recommend starting with a trained 0.6B draft and $k{=}1$.
The main runs use the benchmark settings in Table~\ref{tab:main_results}; depth studies and their
uncertainty are reported in Table~\ref{tab:model}. The recommendation prioritizes low drafting and
training cost.

\noindent
\begin{minipage}[t]{0.56\linewidth}
\vspace{0pt}
\textbf{Tool latency.}
Adding 2 or 5 seconds to every executed tool step in both arms tests the tool terms. At 5 seconds,
measured speedup falls to 1.20\xx{} on $\tau^2$-bench and 1.02\xx{} on BFCL; trace-replay $\beta$
predicts 1.17\xx{} and 1.04\xx{} using measured hit rates and proposing costs (Table~\ref{tab:tool}).
Agreement changes little, but proposals pay tool latency while rolling out. The normalized proposing
cost grows from 0.03 to 0.31 and 0.74 on $\tau^2$-bench, and from 0.04 to 0.23 and 0.50 on BFCL.
On BFCL, turn ends execute no tool, so $f=0.62$ and $g=0.44$--$0.45$ (App.~\ref{app:predictor}).
\end{minipage}\hfill
\begin{minipage}[t]{0.41\linewidth}
\vspace{0pt}
\input{tab_tool_latency}
\end{minipage}
\par\medskip
A round also pays for the target's own tool action, while accepted proposal states are adopted.
These costs account for the loss of speedup despite nearly unchanged hit rates. The four added-delay
predictions are within about 0.08\xx{} of measurement; using the measured verification wait reduces
the residual to about 0.02\xx{}. This diagnostic supports the tool-cost accounting while identifying
wait estimation as a remaining error source.

\subsection{Checking the latency framework}
\label{sec:validation}
Finally, we test the framework's combined accounting of round gains and costs against measured speedup.
We first validate the accounting with the run's measured hit rates and draft cost, while predicting
$\beta$ from its sequential timing trace. Across 21 configurations spanning depths, untrained
drafts, self-drafting and added tool latency, predicted speedup is within 0.05\xx{} on 14 and
within 0.10\xx{} on 19 (Fig.~\ref{fig:pred}a). Component checks and full inputs are in
Fig.~\ref{fig:components} and Table~\ref{tab:model}.

\begin{figure}[tbp]
\centering
\includegraphics[width=\linewidth]{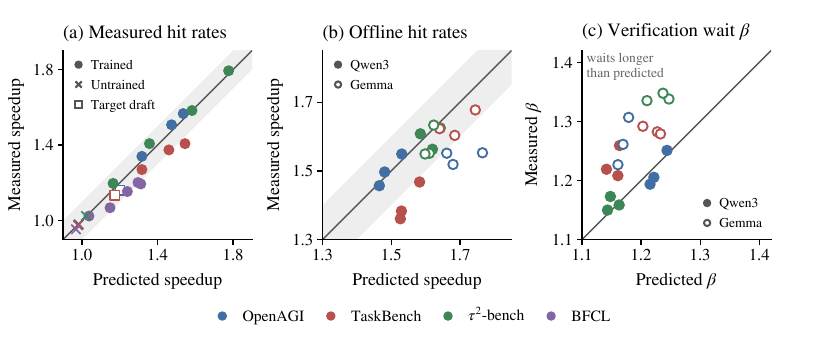}
\caption{\textbf{Predicted versus measured speedup and verification wait.} (a) 21 Qwen3 controls with measured hit rates
and draft cost. (b) Nine size-study configurations per target, using offline hit rates and trace-replay
$\beta$; draft timing sources are in Table~\ref{tab:prediction_inputs}. (c) Verification wait $\beta$ for the
points of (b): the trace-replay prediction used in (b) against the measured value. Points above the
diagonal waited longer than predicted, which raises the predicted speedup. Axes have equal scales within
each panel; shading marks $\pm0.10$ in (a) and (b).}
\label{fig:pred}
\end{figure}

We then replace live hit rates with the draft's predictions on held-out sequential contexts and
replay the commitment rules without live acceptance measurements. In the Qwen3 control runs, offline draft-latency probes
and trace-replay $\beta$ predict 1.48, 1.56 and 1.59\xx{} on OpenAGI, TaskBench and $\tau^2$-bench,
against 1.51, 1.37 and 1.58\xx{} measured. Figure~\ref{fig:pred}b
applies the same replay to all three draft sizes in the serving configurations of Table~\ref{tab:main_results},
giving nine points per target. The 1.7B and 4B Qwen3 planning drafts and the larger Gemma drafts use
measured per-proposal latency, so these predictions are conditional on that cost. Twelve of the eighteen
size-study predictions are within 0.10\xx{}. The control and size studies use their respective serving
configurations; their 0.6B measurements need not coincide.

The errors differ by target. The trace replay underestimates the verification wait $\beta$ on every
Gemma configuration, by 0.05--0.13, and on Qwen3 TaskBench, by 0.05--0.10; on Qwen3 OpenAGI and
$\tau^2$-bench it is within 0.03 (Fig.~\ref{fig:pred}c). A lower $\beta$ raises the predicted speedup:
all six points that miss by more than 0.10 in (b) have an underestimated $\beta$. The cause of this
gap is not established. On Qwen3 TaskBench the replay also overpredicts the first-position hit rate
(0.73 versus 0.66 in the control run), because live trajectories depart from the sequential trace more
often than on OpenAGI. On Gemma $\tau^2$-bench, the replay's lower hit rates offset the $\beta$ gap, so a
small total speedup error can reflect cancellation between input errors. These effects limit reliable
ranking of nearby configurations. BFCL is not in (b) or (c) because it has no offline-hit prediction
(App.~\ref{app:detail:validation}).

\FloatBarrier
\subsection{Learning the draft online}
\label{sec:online}
The separate trace-collection pass is optional: committed actions are target-authored even when the
draft is initially untrained. We start the speculative system with the 0.6B base and process the
training tasks in batches: 25 tasks on OpenAGI and 10 on $\tau^2$-bench. After each batch, an adapter
is trained on all commits so far on a second GPU and loaded into the draft server. Batch $b$ uses
only the adapter trained on earlier batches. A continual variant instead updates the previous
adapter on the new batch plus an equal-size sample of earlier pairs.

\begin{figure}[htbp]
\centering
\includegraphics[width=\mainfigwidth]{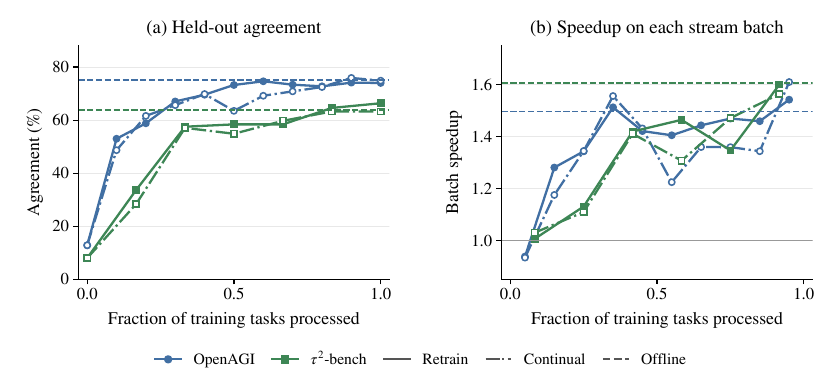}
\caption{\textbf{Learning online, Qwen3 target.} (a) Held-out agreement after each adapter update.
(b) Speedup on each stream batch, placed at the batch midpoint: sequential time per step over
speculative time per committed step. Dashed lines are offline drafts. Each batch uses an adapter
trained on earlier tasks. Training time is reported separately; final test results are in
Table~\ref{tab:online_final}.}
\label{fig:online}
\end{figure}

Agreement grows rapidly (Fig.~\ref{fig:online}a). On OpenAGI, retraining reaches 53\% after 25 tasks,
67\% after 75, and 74.0\% after 250, against the offline draft's 75.0\%. Its batch speedup rises
from 0.94\xx{} initially to 1.4--1.5\xx{} from the fourth batch. On $\tau^2$-bench, agreement rises
from 8.0\% to 66.4\%, versus 63.7\% offline, and the batch speedup rises from 1.01\xx{} to 1.4\xx{}
after 20 tasks and 1.60\xx{} in the last batch (Fig.~\ref{fig:online}b). The final adapters attain
similar test speedups to the offline drafts (Table~\ref{tab:online_final}).

\input{tab_online_results}

Retraining on the growing buffer costs 27--73 seconds per update on OpenAGI and 3--18 minutes on
$\tau^2$-bench. Continual updates take about 22 seconds and 3--6 minutes respectively. They reach
similar endpoints, though OpenAGI dips once before recovering. Update times are reported separately
from inference speedups.
Version-by-version agreement and update timings are in App.~\ref{app:online}.

\section{Conclusion and Limitations}
\label{sec:conclusion}
We asked what determines end-to-end speedup in action speculation. The latency framework compares
the committed steps gained with proposing, verification and tool costs. It identifies the design
goal of proposals that are likely to be accepted and cheap relative to a target step. LEAP meets
this goal by training a tiny draft on the frozen target's actions. Across four benchmarks and two
target families, the trained 0.6B gives 1.10--1.63\xx{} whole-task speedup on one shared GPU.
The same supervision is available during deployment, allowing online training without a prior
trace-collection pass.

We evaluate two target families across various benchmarks, using local
simulated tool backends. The framework covers synchronous proposing followed by exact-match
verification, with target contexts verified concurrently on one GPU. It does not cover asynchronous
draft run-ahead, semantic verification or methods that hide tool latency through overlap.
The loop trades target compute for latency: verification can issue up to
$k+1$ requests per round, and $\tau^2$-bench uses 1.16\xx{} the sequential completion tokens in
the measured control. Slow tools reduce its benefit because proposing pays their execution cost.
Concurrent decoding can change whole trajectories despite target authorship. The predictor assumes
comparable work and transfer of sequential timing to concurrent requests; these approximations and
the bootstrap intervals limit close configuration comparisons. Online training could consume additional
resources depending on the actual setup.
\label{main:text_end}

%% file: figures/fig_loop_compact.tex
\begin{tikzpicture}[x=1cm,y=1cm,>=stealth,
 every node/.style={font=\fontsize{9}{10.5}\selectfont,inner sep=3pt},
 box/.style={draw=black!45,rounded corners=2pt,align=center,line width=.45pt},
 arr/.style={->,draw=black!65,line width=.65pt},
 tag/.style={font=\fontsize{8.5}{10}\selectfont,fill=white,inner sep=2pt}]
\useasboundingbox (-.1,.25) rectangle (14.3,6.55);
\node[anchor=west,font=\fontsize{9.5}{11}\selectfont\bfseries] at (0,6.25) {Train a small draft};
\node[box,fill=genfill,minimum width=2.8cm,minimum height=.8cm] (traces) at (1.5,5.4)
 {Frozen target\\training-task traces};
\node[box,fill=black!3,minimum width=3.25cm,minimum height=.8cm] (pairs) at (5.55,5.4)
 {Training pairs\\(context, target action)};
\node[box,fill=draftfill,minimum width=3.35cm,minimum height=.8cm] (train) at (10.1,5.4)
 {LoRA fine-tuning\\of a 0.6B draft};
\draw[arr] (traces) -- node[tag,above]{offline} (pairs);
\draw[arr] (pairs) -- (train);
\draw[arr,draw=draftline] (train.south) -- (10.1,4.45) -- node[tag,above]{updated draft adapter} (1.5,4.45) -- (1.5,3.28);
\node[anchor=west,font=\fontsize{9.5}{11}\selectfont\bfseries] at (3.95,3.98) {Verify concurrently with the target};
\node[box,fill=draftfill,minimum width=2.8cm,minimum height=1.35cm] (draft) at (1.5,2.6)
 {\textbf{Propose}\\$\hat a_1,\;\hat a_2$\\from context $s$};
\node[box,fill=genfill,minimum width=5.0cm,minimum height=.55cm] (v1) at (6.7,3.3)
 {$T(s)=\hat a_1$ \hspace{.6em} match};
\node[box,fill=genfill,minimum width=5.0cm,minimum height=.55cm] (v2) at (6.7,2.5)
 {$T(s,\hat a_1)=a_2\ne\hat a_2$ \hspace{.5em} mismatch};
\node[box,dashed,fill=black!2,text=black!55,minimum width=5.0cm,minimum height=.55cm] (v3) at (6.7,1.7)
 {$T(s,\hat a_1,\hat a_2)$ \hspace{.7em} cancel};
\draw[arr] (draft.east) -- (3.55,2.6) |- (v1.west);
\draw[arr] (3.55,2.6) |- (v2.west);
\draw[arr] (3.55,2.6) |- (v3.west);
\node[box,fill=savedfill,minimum width=3.15cm,minimum height=1.3cm] (commit) at (12.15,2.9)
 {\textbf{Commit}\\$\hat a_1,\;a_2$\\both target-authored};
\draw[arr] (v1.east) -- (9.85,3.3) |- ([yshift=6pt]commit.west);
\draw[arr] (v2.east) -- (9.85,2.5) |- ([yshift=-6pt]commit.west);
\draw[arr] (commit.south) -- (12.15,.65) -- node[tag,below]{append observations; next round} (1.5,.65) -- (draft.south);
\draw[arr,draw=savedline,dashed] (commit.east) -- (14.05,2.9) -- (14.05,6.25) --
 node[tag,above,text=savedline]{online: collect committed actions} (5.55,6.25) -- (pairs.north);
\end{tikzpicture}

%% file: tab_main_results.tex
\begin{table}[htbp]
\centering\small\setlength{\tabcolsep}{6pt}\renewcommand{\arraystretch}{1.07}
\caption{\textbf{Whole-task speedup across benchmarks, targets and draft sizes.} Values are pooled
speedup factors $\pm$ one task-bootstrap SE (10,000 paired resamples); bold marks the largest rounded
point estimate per row. $k=3$ for planning and $k=2$ for tool benchmarks; test sizes are 63/60/40/60.
Qwen3 planning entries pool two runs. Full 95\% intervals and components: Table~\ref{tab:four}.}
\label{tab:main_results}
\begin{tabular}{llccc}
\toprule
Benchmark & Target & 0.6B & 1.7B & 4B \\
\midrule
OpenAGI & Qwen3-32B & $1.50\,\pm\,0.07$ & $\mathbf{1.55}\,\pm\,0.06$ & $1.46\,\pm\,0.08$ \\
 & Gemma-4-31B & $\mathbf{1.55}\,\pm\,0.11$ & $\mathbf{1.55}\,\pm\,0.11$ & $1.52\,\pm\,0.10$ \\
\addlinespace[3pt]
TaskBench & Qwen3-32B & $1.38\,\pm\,0.08$ & $\mathbf{1.47}\,\pm\,0.07$ & $1.36\,\pm\,0.08$ \\
 & Gemma-4-31B & $1.60\,\pm\,0.07$ & $\mathbf{1.68}\,\pm\,0.07$ & $1.62\,\pm\,0.06$ \\
\addlinespace[3pt]
$\tau^2$-bench & Qwen3-32B & $1.61\,\pm\,0.09$ & $\mathbf{1.63}\,\pm\,0.09$ & $1.56\,\pm\,0.08$ \\
 & Gemma-4-31B & $\mathbf{1.63}\,\pm\,0.07$ & $1.55\,\pm\,0.06$ & $1.55\,\pm\,0.06$ \\
\addlinespace[3pt]
BFCL & Qwen3-32B & $1.18\,\pm\,0.05$ & $1.28\,\pm\,0.04$ & $\mathbf{1.34}\,\pm\,0.07$ \\
 & Gemma-4-31B & $1.10\,\pm\,0.03$ & $1.12\,\pm\,0.03$ & $\mathbf{1.16}\,\pm\,0.04$ \\
\bottomrule
\end{tabular}
\end{table}

%% file: tab_main_agreement.tex
\begin{table}[htbp]
\centering\small\setlength{\tabcolsep}{5pt}
\caption{\textbf{Offline action agreement (\%), Qwen3 target.} Estimate $\pm$ one task-bootstrap SE.
Draft types used in the cited work are evaluated here on common held-out contexts.
``Direct'' disables reasoning; ``reason'' enables it. BFCL includes turn ends.
Label sampling, repeated predictions and the bootstrap protocol are detailed in Table~\ref{tab:offline}.}
\label{tab:offline_main}
\begin{tabular*}{\linewidth}{@{\extracolsep{\fill}}lcccc@{}}
\toprule
Benchmark & 0.6B trained & 0.6B base & 32B direct & 32B reason \\
 & {\scriptsize (\textbf{LEAP})} & {\scriptsize\citep{smc}}
 & {\scriptsize\citep{isp,dsp}}
 & {\scriptsize\citep{spork,swyr}} \\
\midrule
OpenAGI & $75.0\,\pm\,3.1$ & $12.8\,\pm\,1.6$ & $76.6\,\pm\,2.3$ & $97.3\,\pm\,0.9$ \\
TaskBench & $75.2\,\pm\,3.0$ & $25.0\,\pm\,1.8$ & $84.8\,\pm\,2.2$ & $96.9\,\pm\,0.9$ \\
$\tau^2$-bench & $63.7\,\pm\,4.4$ & $8.0\,\pm\,1.1$ & $42.5\,\pm\,2.6$ & $65.0\,\pm\,4.0$ \\
BFCL & $66.7\,\pm\,6.0$ & $2.0\,\pm\,1.8$ & $62.6\,\pm\,3.7$ & $91.9\,\pm\,3.2$ \\
\bottomrule
\end{tabular*}
\end{table}

%% file: tab_main_objective.tex
\begin{table}[htbp]
\centering\small
\caption{\textbf{Supervision and training objective.} Offline agreement (\%) for the Qwen3 target;
BFCL counts all decisions, including turn ends. Gold supervision also changes the training contexts.
No variant improves agreement by the three points required to trigger an end-to-end run.}
\label{tab:objective}
\begin{tabular}{llrr}
\toprule
 & & \multicolumn{2}{c}{offline agreement} \\
\cmidrule(lr){3-4}
objective & supervision & TaskBench & BFCL \\
\midrule
SFT on target actions (ours) & target's committed actions & 75.2 & 66.7 \\
SFT on gold labels & benchmark ground truth & 69.9 & 50.5 \\
GRPO from the SFT draft & exact-match reward & 77.1 & 65.7 \\
\bottomrule
\end{tabular}
\end{table}

%% file: tab_tool_latency.tex
\centering\footnotesize\setlength{\tabcolsep}{3pt}
\captionsetup{type=table}
\caption{\textbf{Tool-delay speedups.} Qwen3, trained 0.6B, $k=2$.}
\label{tab:tool}
\begin{tabular}{lrrr}
\toprule
 & Delay (s) & Pred. & Meas. \\
\midrule
$\tau^2$-bench & 0 & 1.58 & 1.58 \\
               & 2 & 1.36 & 1.41 \\
               & 5 & 1.17 & 1.20 \\
\midrule
BFCL           & 0 & 1.30 & 1.20 \\
               & 2 & 1.15 & 1.07 \\
               & 5 & 1.04 & 1.02 \\
\bottomrule
\end{tabular}

%% file: tab_online_results.tex
\begin{table}[htbp]
\centering\small\setlength{\tabcolsep}{7pt}
\caption{\textbf{Final online adapters on the test split, Qwen3 target.} Whole-task inference
speedup $\pm$ one task-bootstrap SE, using trained 0.6B drafts with matched offline and online serving settings. Training time is separate; 95\% intervals are in Table~\ref{tab:online_final_ci}.}
\label{tab:online_final}
\begin{tabular}{lccc}
\toprule
Benchmark & Offline & Online: retrain & Online: continual \\
\midrule
OpenAGI & $1.51\,\pm\,0.07$ & $1.47\,\pm\,0.08$ & $1.51\,\pm\,0.06$ \\
$\tau^2$-bench & $1.61\,\pm\,0.09$ & $1.61\,\pm\,0.10$ & $1.63\,\pm\,0.10$ \\
\bottomrule
\end{tabular}
\end{table}

%% file: sec_details.tex
\section{Additional related work}
\label{app:related}
\paragraph{Token-level speculative decoding.}
Speculative decoding~\citep{leviathan2023,chen2023} pairs a draft with a verification that keeps the target's output distribution; the draft
may be a small model or heads on the target (Medusa~\citep{medusa}, EAGLE~\citep{eagle}). A large body of
work trains the draft to raise acceptance, for example DistillSpec~\citep{distillspec} with on-policy
data and tailored divergences and domain-adapted drafts~\citep{domaindraft}. Online speculative
decoding~\citep{osd} keeps distilling a token-level draft from the queries a deployment serves; Sec.~\ref{sec:online}
does the same at the action level, where the supervision is an exact action rather than a distribution. All of it is at the token
level, where acceptance depends on matching the target's token distribution. Our setting is action-level:
only the target's decision matters, which is why we train by supervised imitation of committed actions
(Sec.~\ref{sec:train}). Our accounting of committed steps per round is the
token-level acceptance-length formula of \citet{leviathan2023} adapted to actions.

\paragraph{Action-level speculation for agents.}
Two lines exist. The first overlaps the target's \emph{decision} latency: an approximation agent runs
ahead and the target verifies. Interactive Speculative Planning~\citep{isp} and its dynamic-$k$
successor~\citep{dsp} use a cheaper prompting configuration of an API model as the approximation;
Speculative Actions~\citep{specactions} uses off-the-shelf API models and gives propositions for
single-step and asynchronous depth speculation, with latency measured on chess; Speculative Macro
Commit~\citep{smc} commits mined multi-call macros with an untrained 4B draft on three GPUs; other
systems use the same model without reasoning or off-the-shelf checkpoints~\citep{spagent,dualspec}. The second
line overlaps \emph{tool} latency: Speculative Tool Calls~\citep{stc} and PASTE~\citep{paste} execute a
predicted tool call while the model reasons, SPORK~\citep{spork} forks a probe from the running model's
own state for the same purpose, and Speculate with Memory~\citep{swm} pre-launches read-only actions from
a memory of past trajectories. AOSpec~\citep{aospec} co-speculates actions and observations so that slow
tool calls overlap generation, verifying each prediction together with the state it was made from; it
targets the regime where tools dominate, the complement of the generation-bound regime of
Sec.~\ref{sec:model}. These works bound their gain by the tool-time share (at most 2\xx{} in
\citet{stc}). LEAP reduces the number of sequential target generations and pays tool time in
full; the two compose. Speculative Actions~\citep{specactions} analyzes single-step and asynchronous
action speculation. Our framework studies synchronous proposal rounds with concurrent verification
on one shared GPU. It accounts for verification wait, segment boundaries, drafting cost and tool time
(Sec.~\ref{sec:model}). Among the self-speculation methods considered here, Speculate While You Reason~\citep{swyr},
trains the agent jointly with a self-speculation head via RL and reports Hit@1 without end-to-end
latency. LEAP studies a separately trained small action draft for a frozen target and measures wall-clock on a shared GPU.

\paragraph{Benchmarks.}
We use OpenAGI~\citep{openagi}, TaskBench~\citep{taskbench}, the $\tau^2$-bench domain of
$\tau^2$-bench~\citep{tau2bench}, and the multi-turn split of BFCL~\citep{bfcl}.

\section{Algorithm, guarantee and supervision}
\label{app:method}
\subsection{Problem, guarantee and notation}
\label{app:detail:problem}
A target agent $T$ solves a task by a sequence of actions. At step $i$ it sees a context $s_i$
(instruction, tool schemas, the actions and observations so far, and any user turns) and emits the action
$a_i = T(s_i)$, the canonical action that greedy decoding returns on $s_i$ after its reasoning; on tool benchmarks the action is
executed and its observation is appended to the context. We want to run this agent faster while every
action it executes is still its own.

Property~\ref{prop:exact} states the per-action guarantee used here.
The loop of Sec.~\ref{sec:loop} maintains this property by construction, and the harness checks it on every
task. The property holds per action, not per trajectory. The serving engine's floating-point results depend on
which requests it decodes together, and speculation verifies several contexts at once. Run alone, the target
repeats its trajectory exactly; with speculation, a small numerical difference can change a token in a long
chain of reasoning and with it a decision, after which the trajectory differs. Batching is the main source but not
the only one: with every verification decoded alone, trajectories still differ on some tasks (App.~\ref{app:success}).
Speculative Actions~\citep{specactions} calls such verification lossless; we keep the per-action guarantee and measure how
often whole trajectories differ (App.~\ref{app:success}).

\textbf{Draft and agreement.} A draft $D$ is a small model that predicts $T$'s action from the same
context, $\hat a = D(s)$, decoded greedily without reasoning. Actions are compared after
canonicalization (tool name and sorted arguments, or the reply marker for a user-facing message). On a set
of $n$ target contexts $\{(s_i, a_i)\}$ the offline \emph{agreement} of $D$ is
$\frac{1}{n}\sum_{i}\mathbbm{1}[D(s_i) = a_i]$. Its counterpart in the running system is the
per-position acceptance $q_j$ of the loop, defined in Sec.~\ref{sec:model} and compared with the offline
rate in Table~\ref{tab:agree}. Table~\ref{tab:notation} collects the notation.

\begin{table}[t]
\centering
\caption{Notation.}
\label{tab:notation}
\small
\begin{tabular}{ll}
\toprule
$T$, $D$ & target agent; draft \\
$s_i$, $a_i$ & context and target action at step $i$; $\hat a_j$: $j$-th proposal of a round \\
$k$ & proposals per round (depth) \\
$A$ & accepted proposals in a round; the round commits $A{+}1$ steps \\
$L$ & agent steps in a segment (between user turns or up to the end of the task) \\
$q_j$ & $\Pr[A \ge j]$, hit rate at position $j$; $q_1$ is the first-position acceptance \\
$c$, $e$ & committed steps per round; expected bonus-step loss at segment ends (Eq.~\ref{eq:c}) \\
$t_{\mathrm{gen}}$, $t_{\mathrm{tool}}$, $t_{\mathrm{draft}}$ & mean target generation, tool and draft-proposal times; $d = t_{\mathrm{draft}}/t_{\mathrm{gen}}$ \\
$\beta$ & round-wait inflation: mean of the slowest awaited verification over $t_{\mathrm{gen}}$ \\
$S$ & whole-task speedup, sequential over speculative wall-clock (Eq.~\ref{eq:S}) \\
\bottomrule
\end{tabular}
\end{table}

\subsection{The speculative loop}
\label{app:detail:loop}
Execution proceeds in \emph{rounds}. The draft and the target alternate; the draft never runs ahead of the
target. Given the committed prefix $s$, a round has three steps (Fig.~\ref{fig:loop}).
\begin{enumerate}
  \item \textbf{Propose.} $D$ proposes $\hat a_1,\dots,\hat a_k$ one after another, each conditioned on
  $s$ and its own earlier proposals. On tool benchmarks each proposal is executed in a copy of the
  environment so that the next proposal can see its observation; the real environment executes only
  committed actions, and an accepted proposal is not executed again: the copy in which it ran becomes the
  environment.
  \item \textbf{Verify.} $T$ generates its own action on the $k{+}1$ contexts $s$, $(s,\hat a_1)$,
  \dots, $(s,\hat a_1,\dots,\hat a_k)$, sent as concurrent requests to the same server that serves it
  alone. Let $A$ be the number of leading proposals that equal $T$'s action on their context.
  \item \textbf{Commit.} The round commits $\hat a_1,\dots,\hat a_A$ and, after them, $T$'s own action on
  the first context they do not match; if all $k$ match, that is $T$'s action on the $(k{+}1)$-st
  context, a \emph{bonus} step. The remaining verifications are discarded.
\end{enumerate}
A round thus commits $A{+}1 \le k{+}1$ steps for one target wait, and every committed action is one that
$T$ generated on the context in which it is committed (Property~\ref{prop:exact}). Rounds never cross a
\emph{segment} boundary: the end of the task, or a reply to the user, after which the next context
contains a user turn the draft cannot predict. When a proposal is a reply, no request is issued past it,
and a round that verifies the reply ends without a bonus step. $L$ denotes the number of agent steps in a
segment. Fig.~\ref{fig:timelines} shows where the time goes when every proposal is accepted and when
the first one is rejected.

\begin{proposition}[commit-eligible prefixes are teacher-forced]
\label{prop:commit}
$\hat a_j$ is committed only if $\hat a_{<j} = a_{<j}$; hence every context in which a proposal can be
committed is a prefix of the target's own trajectory, at every position $j$ of the round and not only at
$j{=}1$, and the committed action in that context is the target's teacher-forced action. Deployment
therefore draws its (context, committed action) pairs from the target's own traces; which contexts are
drawn depends on where rounds start and which proposals are accepted, a selection that
Sec.~\ref{sec:validation} measures.
\end{proposition}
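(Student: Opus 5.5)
The argument is an induction on the position $j$ within a round, read off the commit rule of App.~\ref{app:detail:loop}. Fix a round starting from a committed prefix $s$. By induction over earlier rounds (using Property~\ref{prop:exact}), $s$ is the target's own trajectory: every action in it is $T$'s greedy output on its own context, and every observation was produced by the real environment or by an adopted copy in which exactly those actions ran. Set $a_1=T(s)$ and, recursively, $a_j=T(s,a_{<j})$, with observations appended as in sequential execution. These $a_j$ are the target's teacher-forced continuation from $s$.

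\textbf{Step 1 (the commit condition).} By the commit rule, the round commits $\hat a_1,\dots,\hat a_A$, where $A$ is the length of the leading run of matches. So $\hat a_j$ is committed iff $j\le A$, i.e.\ iff $\hat a_i=T(s,\hat a_{<i})$ for every $i\le j$.

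\textbf{Step 2 (induction on $i$).} Show that $\hat a_{<i}=a_{<i}$ for all $i\le j$. The base case $i=1$ is trivial. For the inductive step, assume $\hat a_{<i}=a_{<i}$. Then the context $(s,\hat a_{<i})$ is literally the context $(s,a_{<i})$. Match at $i$ gives $\hat a_i=T(s,a_{<i})=a_i$. One point needs care here: the verify context includes the observations of the proposals, which came from the environment copy. I will argue that this copy began from the committed state and executed exactly the actions $a_{<i}$, so its observations equal those the sequential run would see. This requires deterministic tools given identical state and action, which is an assumption of the local simulated backends. It follows that $\hat a_j$ is committed only if $\hat a_{<j}=a_{<j}$, and the context $(s,a_{<j})$ is a prefix of the target's trajectory from $s$. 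Combined with $s$ being such a prefix, this holds at every position $j$, not just $j=1$.

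\textbf{Step 3 (the training-pair claim).} The committed action in that context is $T(s,a_{<j})=a_j$, the teacher-forced label. The bonus or correction action is also $T$ on a matched prefix, so it qualifies too. Therefore every (context, committed action) pair produced by deployment is a pair from the target's traces. Which pairs appear is determined by round starts and acceptance outcomes. I will only remark on this selection and not prove anything about it, since the statement defers it to Sec.~\ref{sec:validation}.

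The main obstacle is Step 2's identification of contexts. Beyond the action match, it needs the observations and any environment state to coincide. It also needs ``trajectory'' to mean the greedy target trajectory conditioned on the same engine outputs, because batching can perturb $T$ itself (App.~\ref{app:detail:problem}). I will state explicitly that $T(\cdot)$ denotes the verification output, which makes the claim hold by construction.
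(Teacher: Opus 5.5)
Your argument is correct and follows the paper's proof, which reads off the commit rule that a committed $\hat a_j$ lies in the leading run of matches ($j\le A$ with $A=\min\{j:\hat a_j\neq a_j\}-1$); your induction makes explicit the step the paper leaves implicit, namely that on a matched prefix, matching against $T(s,\hat a_{<i})$ is the same as matching against $a_i$. The tool-determinism assumption is not needed for this statement: the copy in which the accepted proposals ran is adopted as the environment, so their observations are by construction those of the committed trajectory, and determinism matters only for identity with a separate target-only run, which the paper treats as a separate condition.
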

\emph{Proof.} A round commits $\hat a_1,\dots,\hat a_A$ with $A = \min\{j : \hat a_j \neq a_j\} - 1$. If
$\hat a_j$ is committed then $j \le A$, so $\hat a_{<j} = a_{<j}$ and $\hat a_j$ was produced by
$D$ on $(s, a_{<j})$. Proposals made on a mismatched prefix are discarded, never committed.
\hfill$\square$

If the target's output depended only on the context, and not on the requests decoded with it or before
it, and if tool results depended only on the environment state and the call, then by
Proposition~\ref{prop:commit} and induction over rounds the committed trajectory would be exactly the
target-only one. Property~\ref{prop:exact} needs neither condition; the first does not hold exactly in our
serving engine (Sec.~\ref{sec:problem} and App.~\ref{app:success}).

\paragraph{Implementation.}
Verification responses are consumed in prefix order; as soon as context $j$ mismatches, the requests for
contexts $j{+}1,\dots,k$ are cancelled and the server aborts them, so a round never waits for a
verification it will discard. Both models are served by the same engine on one GPU; the draft is a LoRA
adapter on a 0.6B base run without reasoning (App.~\ref{app:serving}).

\begin{figure}[tbp]
\centering\setlength{\fboxsep}{2pt}
\renewcommand{\looptagnode}[1]{}\renewcommand{\looptagA}[1]{}
\resizebox{0.9\linewidth}{!}{\input{figures/fig_loop_legend}}\par\smallskip
\begin{subfigure}{\linewidth}\centering
\resizebox{\linewidth}{!}{\input{figures/fig_loop_accept}}
\caption{Both proposals accepted: three committed steps in one verification round.}
\end{subfigure}\par\smallskip
\begin{subfigure}{\linewidth}\centering
\resizebox{\linewidth}{!}{\input{figures/fig_loop_reject}}
\caption{First proposal rejected: one committed step, plus proposing cost.}
\end{subfigure}
\caption{\textbf{Round timelines with $k=2$.} Illustrative durations. A successful round waits for the
slowest required verification; on a mismatch, unused requests are cancelled. Tool execution for
proposals occurs while proposing, and accepted environment states are adopted.}
\label{fig:timelines}
\end{figure}

\subsection{Training the draft on the target's committed actions}
\label{app:detail:train}
\textbf{Supervision.} The training set is a by-product of running the agent (Fig.~\ref{fig:loop}): every step yields a pair
$(s_i, a_i)$ of context and the target's committed action. We run $T$ once over the training tasks, log
these pairs, and fine-tune $D$ by cross-entropy on the action tokens given the context. No target logits,
no draft rollouts, and no human labels are used. The separate pass is a convenience, not a requirement:
once the speculative system runs, every action it commits is one the target generated
(Proposition~\ref{prop:commit}), whatever the draft proposed, so its commits are the same supervision.
Sec.~\ref{sec:online} trains the draft from them while the system serves.

\textbf{Predict the decision.} The label $a_i$ is the canonical action, not the target's reasoning text.
The draft is asked to reproduce a low-entropy policy's next decision, not a 32B model's output
distribution, which is what makes a 0.6B base viable. The draft runs with reasoning disabled and, on
tool benchmarks, under the tool schema as a decoding constraint. Implementation: LoRA rank 64,
$\alpha{=}128$, learning rate $3\times10^{-5}$, two to three epochs with the checkpoint chosen on a
validation split; training takes minutes on one GPU per benchmark.

\paragraph{Why imitation is a natural objective.}
By Proposition~\ref{prop:commit} (Sec.~\ref{sec:loop}), a proposal is committed only if it was generated
on the target's true prefix, so every context in which a proposal can be committed is a prefix of the
target's own trajectory, at every position and not only the first. Supervised imitation on the target's
traces trains $D$ on contexts of this kind. Acceptance depends only on the target's decision, so the label
is the canonical action, not the target's full output distribution. Plain cross-entropy on the canonical
action is therefore the natural and cheapest objective. It is a surrogate for exact-match acceptance, and
Sec.~\ref{sec:agreement} tests alternatives. The multi-step figure of merit is the per-position hit-rate
curve $q_j$ of Sec.~\ref{sec:model}, not the single-step agreement.

\section{Latency framework and additional checks}
\label{app:model}
\input{sec_framework}
\section{Complete experimental results}
\label{app:results}
\label{app:detail:experiments}
\subsection{Setup}
\label{app:detail:setup}
We use two targets. The main one is Qwen3-32B (temperature 0, reasoning enabled); the second,
Gemma-4-31B-it, repeats the whole pipeline in Sec.~\ref{sec:results4}. The drafts are
Qwen3-0.6B, 1.7B and 4B, each with a LoRA adapter trained as in Sec.~\ref{sec:train} on the traces of
one target and one benchmark; the 0.6B draft is the main one, and the offline rates in this section are
for the Qwen3 target. Target and draft are served on one H100; the draft is co-located with the target
and takes about a tenth of the GPU memory. Splits are task-disjoint.
OpenAGI~\citep{openagi}: 250 train / 63 test tasks, 1{,}090 training pairs, ReAct-style planning with a
fixed tool set. TaskBench~\citep{taskbench}: 228 train / 60 test tasks, 1{,}120 pairs, strict canonical
tool schemas; no test task's tool sequence appears in training. $\tau^2$-bench~\citep{tau2bench}: we use its telecom domain only, in solo mode (the agent works without a
simulated user); 60 / 14 / 40 train / validation / test tasks, 932 pairs, 44 tools executed against the
benchmark's local backend. BFCL is the multi-turn split of the Berkeley Function
Calling Leaderboard~\citep{bfcl}: 130 train / 60 test tasks, 1{,}254 pairs; user
turns are speculation boundaries and proposals execute in cloned environments. Each test task is run once
sequentially (target only) and once speculatively on the same node and service, and we report the pooled
ratio $\sum \text{seq} / \sum \text{spec}$ of whole-task wall-clock (App.~\ref{app:serving} gives the timing protocol).

\begin{table}[t]
\centering
\caption{\textbf{Offline action agreement with the Qwen3-32B target} (exact match on the target's own
held-out contexts; repeated predictions averaged per context, except the $\tau^2$-bench
32B-reason reference, which uses the mode of three passes). ``32B-direct'' is the target with
reasoning disabled; ``32B-reason'' is the target itself with reasoning on, replayed on the same contexts,
i.e. on the planning benchmarks its agreement with its own earlier pass, a reference set by the engine's nondeterminism rather than a bound. The three prior-work columns are an off-the-shelf small model, a cheaper
configuration of the same model, and the target as its own draft, for which ``32B-reason'' is the reference, all
measured in one setting. BFCL rows are on 99 development contexts and count every decision, a turn end or an exact tool group;
the tool-group rate used for the acceptance comparison is in Table~\ref{tab:agree}. $\tau^2$-bench labels were sampled from the target at temperature 0.6,
and all $\tau^2$-bench columns are measured against them; there, as on BFCL, the 32B-reason column replays the
target (reasoning on, temperature 0) on the same contexts.}
\label{tab:offline}
\small
\resizebox{\linewidth}{!}{%
\begin{tabular}{lrrrrr}
\toprule
 & & \multicolumn{3}{c}{drafts used in prior work} & ours \\
\cmidrule(lr){3-5}\cmidrule(lr){6-6}
benchmark & contexts & untrained 0.6B & 32B-direct & 32B-reason & trained 0.6B \\
 & & {\scriptsize\citep{smc}} & {\scriptsize\citep{isp,dsp}} & {\scriptsize\citep{spork,swyr}} & \\
\midrule
OpenAGI   & 263 & 12.8 & 76.6 & 97.3 & 75.0 \\
TaskBench & 277 & 25.0 & 84.8 & 96.9 & 75.2 \\
$\tau^2$-bench   & 226 & 8.0  & 42.5 & 65.0 & 63.7 \\
BFCL      & 99  & 2.0  & 62.6 & 91.9 & 66.7 \\
\bottomrule
\end{tabular}}
\end{table}

The standard errors in Table~\ref{tab:offline_main} use 10,000 task-level bootstrap resamples.
All contexts and repeated predictions from a task stay together; each replicate recomputes the
context-weighted agreement. The evaluated sets contain 63 OpenAGI, 57 TaskBench, 14 $\tau^2$-bench
and 10 BFCL tasks. These SEs quantify task-sampling uncertainty conditional on the evaluated models
and runs, rather than variation across training seeds.

Table~\ref{tab:offline} gives the offline agreement of the drafts used below. Its columns are the draft
families of prior work, measured here in one setting: an untrained small model, the target with reasoning
disabled (a cheaper configuration of the same model), and, as a reference for self-speculation, the target
replaying itself with reasoning on (32B-reason). Training raises the 0.6B draft from 2--25\% to 64--75\% on every benchmark. On $\tau^2$-bench, whose labels were sampled at temperature 0.6, even the target replayed at temperature 0 matches only 65.0\% of them, so the trained draft (63.7\%) is 1.3 points below the target itself, and the 1.7B and 4B drafts of Sec.~\ref{sec:training} exceed it (68.1\% and 69.5\%). The target with reasoning disabled is a strong
predictor of its own decisions on the planning benchmarks (76.6 and 84.8, at or above the trained draft)
but not on the tool benchmarks: on $\tau^2$-bench it agrees with its reasoning-on decisions on 42.5\% of
contexts, 21 points below the trained 0.6B draft, and on BFCL on 62.6\%, four points below. Where the
decision depends on reasoning over a policy and a conversation, disabling reasoning changes the decision,
and a small draft trained on the reasoning-on decisions predicts them better than the target itself does
without reasoning.

\begin{table}[t]
\centering
\caption{\textbf{From offline agreement to live acceptance} for the trained 0.6B drafts. Offline:
exact match on the target's held-out contexts (Table~\ref{tab:offline}: mean over repeated predictions; BFCL: tool-group rate). Live,
from the end-to-end runs of Table~\ref{tab:four}: $q_1$, the fraction of rounds whose first proposal is
accepted; $q_2/q_1$ and $q_3/q_2$, the ratios of successive hit rates, which also count the rounds that
stop proposing before the position; $a_2$ and $a_3$, the acceptance of the second and third proposal
among the rounds that made it and accepted the earlier ones; and the pooled per-proposal acceptance.
BFCL live rates are per tool group. A dash marks a position that does not exist: $\tau^2$-bench and BFCL run
with $k{=}2$.}
\label{tab:agree}
\small
\begin{tabular}{lrrrrrrr}
\toprule
benchmark & offline & $q_1$ & $q_2/q_1$ & $q_3/q_2$ & $a_2$ & $a_3$ & per proposal \\
\midrule
OpenAGI   & 75.0 & 0.71 & 0.63 & 0.63 & 0.81 & 0.80 & 0.75 \\
TaskBench & 75.2 & 0.66 & 0.51 & 0.53 & 0.66 & 0.73 & 0.68 \\
$\tau^2$-bench   & 63.7 & 0.64 & 0.54 & --   & 0.63 & --   & 0.64 \\
BFCL      & 49.2 & 0.38 & 0.20 & --   & 0.62 & --   & 0.53 \\
\bottomrule
\end{tabular}
\end{table}

\paragraph{From agreement to acceptance.}
Table~\ref{tab:agree} puts the offline and the live rates side by side. At the first position, live
acceptance is within four points of offline agreement on OpenAGI and $\tau^2$-bench and nine points below it on
TaskBench, where the speculative run's trajectories depart from the canonical ones on more tasks
(trajectory parity 0.86 against 0.98 on OpenAGI), so the draft is scored on contexts it has not seen in
that form. On BFCL a quarter of the rounds are hand-off probes at turn boundaries that carry no proposal,
which lowers $q_1$ below the per-group rate. The hit rates fall with position ($q_2/q_1$ is 0.51--0.63),
but not because later proposals are harder: among the rounds that make a second or third proposal and
accepted the earlier ones, the acceptance ($a_2$, $a_3$) is 0.81 and 0.80 on OpenAGI against 0.71 at the
first position, and flat on TaskBench and $\tau^2$-bench. The fall comes from rounds that run into the end of
the task, or into an invalid draft output, so that the later positions do not exist: on OpenAGI the
second and third positions exist in 81\% and 62\% of the rounds. This is why the framework of
Sec.~\ref{sec:model} carries per-position rates $q_j$, which include this truncation, rather than a
single acceptance rate.

\input{sec_results_v2}

\section{Additional training studies}
\label{app:training}
\input{sec_training}
\section{Additional limitations}
\label{app:limitations}
The speedups are 1.2--1.6\xx{}, and the framework of Sec.~\ref{sec:model} explains why they are not larger on
these benchmarks: tasks are short (4--14 steps), BFCL's user turns cut segments to 2.6 steps, and the
round cost $\beta$ of the trained drafts is 1.1--1.3. We evaluate two
target families (Qwen3-32B and Gemma-4-31B) on one GPU class, and our tools run against local simulated backends. With added tool latency the speedup
falls, to 1.20\xx{} on $\tau^2$-bench and 1.02\xx{} on BFCL at 5\,s per tool step (Table~\ref{tab:tool}), because the
draft's proposals execute their tools while proposing; with slow tools the tool-overlap
mechanisms of \citet{stc,swm,aospec} would matter more than ours. Every action LEAP executes is
target-authored, but the trajectories are not those of a target-only run: speculation changes which requests the
engine decodes together, and on the tool benchmarks, whose steps reason for 10--12\,s, most speculative
trajectories take a different decision at some step (App.~\ref{app:success}; two $\tau^2$-bench tasks even ran to
the token budget in the speculative arm only). Task success shows no systematic change. Proposing and verifying do not overlap, so the draft's cost is paid in full; a draft that ran ahead of the
target would hide that term, which Eq.~\ref{eq:S} without the draft term puts at 1.47$\to$1.72\xx{} on OpenAGI
and 1.46$\to$1.66\xx{} on TaskBench and at a few hundredths on the tool benchmarks. Verification issues up to
$k{+}1$ target generations per round (1.16\xx{} the sequential completion tokens on $\tau^2$-bench), so the
method trades compute for latency. Test sets are 40--63 tasks; task-level bootstrap intervals of the pooled
ratio are 0.15--0.30 wide (Table~\ref{tab:four}), so differences of a few
hundredths between configurations are not resolved by these runs. The online experiment of
Sec.~\ref{sec:online} makes one pass over each benchmark's training split; retraining on the whole buffer
costs more with every batch, and the continual variant that fixes this cost dipped once on OpenAGI before recovering,
which we have not studied further.

%% file: figures/fig_loop_legend.tex
\begin{tikzpicture}[x=1.05cm, y=1cm, font=\footnotesize]
\useasboundingbox (-1.75,-0.1) rectangle (11.4,0.78);
\foreach \x/\y/\sty/\txt in {0.2/0.45/gen/target generation, 4.0/0.45/draftbar/draft proposal, 7.8/0.45/toolbar/tool execution,
                              0.2/0.05/cancelbar/cancelled verification, 4.0/0.05/savedband/time saved, 7.8/0.05/extraband/extra time}{
  \draw[\sty] (\x,\y+0.03) rectangle (\x+0.3,\y+0.27); \node[anchor=west] at (\x+0.32,\y+0.15) {\txt};}
\end{tikzpicture}

%% file: figures/fig_loop_accept.tex
\begin{tikzpicture}[x=1.05cm, y=1cm, font=\scriptsize, >=stealth]
\loopframe{(c) both proposals accepted}
\loopband{6.2}{10.0}{savedband}{savedline}{time saved}
\node[rowlab] at (-0.1,2.16) {sequential};
\loopbar{0}{2.4}{1.95}{gen}{0.42}{$a_1$}\loopbar{2.4}{3.2}{1.95}{toolbar}{0.42}{}
\loopbar{3.2}{6.0}{1.95}{gen}{0.42}{$a_2$}\loopbar{6.0}{7.2}{1.95}{toolbar}{0.42}{}
\loopbar{7.2}{9.4}{1.95}{gen}{0.42}{$a_3$}\loopbar{9.4}{10.0}{1.95}{toolbar}{0.42}{}
\node[rowlab] at (-0.1,0.95) {speculative};
\loopbar{0}{0.4}{1.08}{draftbar}{0.40}{$\hat a_1$}\loopbar{0.4}{1.2}{1.08}{toolbar}{0.40}{}
\loopbar{1.2}{1.6}{1.08}{draftbar}{0.40}{$\hat a_2$}\loopbar{1.6}{2.8}{1.08}{toolbar}{0.40}{}
\loopbar{2.8}{5.2}{1.13}{gen}{0.30}{verify $\hat a_1$: \checkmark}
\loopbar{2.8}{5.6}{0.78}{gen}{0.30}{verify $\hat a_2$: \checkmark}
\loopbar{2.8}{5.0}{0.43}{gen}{0.30}{bonus $a'_3$}
\draw[densely dotted, black!60] (5.6,0.38) -- (5.6,1.55);
\node[note, anchor=south west] at (5.6,1.55) {the round waits for the slowest request ($\beta$)};
\loopbar{5.6}{6.2}{1.08}{toolbar}{0.40}{}
\node[note, anchor=west] at (6.25,1.28) {commit $\hat a_1\,\hat a_2\,a'_3$: 3 steps for 1 wait};
\end{tikzpicture}

%% file: figures/fig_loop_reject.tex
\begin{tikzpicture}[x=1.05cm, y=1cm, font=\scriptsize, >=stealth]
\loopframe{(d) first proposal rejected}
\loopband{3.2}{6.0}{extraband}{extraline}{extra time: the proposing cost $k\,(t_{\mathrm{draft}}+t_{\mathrm{tool}})$}
\node[rowlab] at (-0.1,2.16) {sequential};
\loopbar{0}{2.4}{1.95}{gen}{0.42}{$a_1$}\loopbar{2.4}{3.2}{1.95}{toolbar}{0.42}{}
\node[rowlab] at (-0.1,0.95) {speculative};
\loopbar{0}{0.4}{1.08}{draftbar}{0.40}{$\hat a_1$}\loopbar{0.4}{1.2}{1.08}{toolbar}{0.40}{}
\loopbar{1.2}{1.6}{1.08}{draftbar}{0.40}{$\hat a_2$}\loopbar{1.6}{2.8}{1.08}{toolbar}{0.40}{}
\loopbar{2.8}{5.2}{1.13}{gen}{0.30}{verify $\hat a_1$: $\times$}
\loopbar{2.8}{5.2}{0.78}{cancelbar}{0.30}{cancelled}
\loopbar{2.8}{5.2}{0.43}{cancelbar}{0.30}{cancelled}
\draw[densely dotted, black!60] (5.2,0.38) -- (5.2,1.55);
\node[note, anchor=south west] at (5.2,1.55) {the round ends at the first mismatch};
\loopbar{5.2}{6.0}{1.08}{toolbar}{0.40}{}
\node[note, anchor=west] at (6.05,1.28) {commit $a'_1$ only: 1 step};
\end{tikzpicture}

%% file: sec_framework.tex
\subsection{A latency framework for the loop}
\label{app:detail:model}

We give a latency framework for the rounds of Sec.~\ref{sec:loop}. It
separates the quantities that decide the speedup (acceptance, depth, task
structure, round cost, draft cost), and it predicts the end-to-end speedup of
a configuration from its hit rates, its draft cost and a sequential trace
(Sec.~\ref{sec:validation}), which also tests predictions using offline hit rates.

The framework covers exact-match action verification on one shared GPU. Proposing finishes
before verification begins. It does not model asynchronous draft run-ahead, semantic verification,
or mechanisms that overlap tool execution with model generation.

\paragraph{Quantities.}
For a benchmark and a target we measure the mean wall-clock of one sequential
target step $t_{\mathrm{gen}}$ (generation only), of one tool execution
$t_{\mathrm{tool}}$, and of one draft proposal $t_{\mathrm{draft}}$, and write
$d = t_{\mathrm{draft}}/t_{\mathrm{gen}}$. The \emph{hit rates}
$q_j = \Pr[A \ge j]$, $j = 1,\dots,k$, are the probabilities that the first $j$
proposals of a round all match; $q_1$ is the first-position acceptance. The
\emph{round-wait inflation} $\beta$ accounts for the round waiting for the
slowest of the $A{+}1$ verifications it uses: its generation cost is the
expected maximum of those requests rather than one mean step, written
$\beta\, t_{\mathrm{gen}}$. If the awaited step times were
independent draws from the sequential step-time distribution,
$\beta = \sum_{a} \Pr[A{=}a]\; \mathbb{E}[\max_{a+1}] / t_{\mathrm{gen}} \ge 1$, a
closed form in the empirical distribution; consecutive steps are correlated,
so we also replay the actual trace (Sec.~\ref{sec:validation}). We compare both estimates with the measured value (Fig.~\ref{fig:components}(b)).

\paragraph{Commits per round.}
A round commits $A{+}1$ steps unless the segment ends inside the accepted
prefix, in which case the bonus step is lost. Writing $e$ for the expected loss
per round,
\begin{equation}
  c \;=\; 1 + \sum_{j=1}^{k} q_j \;-\; e,
  \qquad e \;=\; \Pr[\text{segment ends within the accepted prefix}].
  \label{eq:c}
\end{equation}
If hits were independent with rate $p$ and segments were long ($e \to 0$),
Eq.~\ref{eq:c} would be the token-level expected acceptance length of
\citet{leviathan2023}, $c = 1 + p + \dots + p^{k}$. Measured hit rates decay
faster than $q_1^{\,j}$ (Fig.~\ref{fig:components}(a)), partly because rounds
reach task or user-turn boundaries. Acceptance conditional on an existing proposal and an accepted
prefix need not decline (Table~\ref{tab:agree}). On OpenAGI the second and third positions exist in
81\% and 62\% of the rounds. For the same reason $e$ is not small on short
tasks (0.38 on OpenAGI, whose tasks average 4.2 steps). In all cases
$c \le \min(k{+}1, L)$: short segments cap the achievable depth regardless of
acceptance.

\paragraph{Round time and speedup.}
Let $D$ be the mean proposing time per round, including proposal-tool execution, divided by
$t_{\mathrm{gen}}$. Let $f$ be the fraction of sequential steps that execute a tool and $g$ the
mean number of target-executed tool actions per round. Write $\tau=t_{\mathrm{tool}}/t_{\mathrm{gen}}$.
A sequential step costs $t_{\mathrm{gen}}(1+f\tau)$. A speculative round costs approximately
$t_{\mathrm{gen}}(D+\beta+g\tau)$. Accepted proposal states are adopted, so their tool costs are
included in $D$ only once. For comparable work in the two arms, the speedup is
\begin{equation}
 S\simeq\frac{c(1+f\tau)}{\beta+D+g\tau}
 \quad\xrightarrow{\;\tau\to0\;}\quad\frac{c}{\beta+D}.
 \label{eq:S}
\end{equation}
This is the same model as Eq.~\ref{eq:main:S}. If every round makes exactly $k$ proposals and
tool time is negligible, then $D=kd$. More generally, if each proposal executes a tool and
$f=g=1$, then $D=k(d+\tau)$ and the denominator is $\beta+k(d+\tau)+\tau$.

The framework separates four effects. Acceptance controls $c$ and the number of awaited
verifications. Depth can increase $c$, proposing cost and verification wait. Segment boundaries
limit both the number of proposals and the bonus action. The draft's relative cost enters through
$D$, which explains why a smaller draft can be faster at comparable acceptance
(Sec.~\ref{sec:selfspec}). With negligible tool time, two bounds follow:
$S\le c/\beta\le(k+1)/\beta$. For long segments with independent hits, the acceptance ceiling
approaches $1/((1-p)\beta)$ as $k$ grows. When all $k$ proposals are made, their cost $kd$ keeps
growing, so deeper speculation eventually stops paying off.

Verification issues up to $k+1$ target requests per round. The number of requests per committed
step is therefore at most $(k+1)/c$. Discarded requests are cancelled early; the measured completion-token
ratio is 1.16 on $\tau^2$-bench. These costs apply to our synchronous propose--verify schedule.
Speculative Actions~\citep{specactions} overlaps draft and target execution, so its latency analysis
uses a different execution schedule.

\paragraph{Interpreting acceptance.}
For long segments, independent hits of probability $p$, negligible tool time and exactly $k$
proposals per round, the model becomes
\begin{equation}
 S(p)\simeq\frac{1+p+\cdots+p^k}{\beta(p)+kd}.
 \label{eq:main:acceptance}
\end{equation}
The numerator is nonlinear for $k\ge2$, and verification wait can vary with $p$. Boundaries add
another dependence: a one-action segment commits only one action even with perfect agreement.
Thus equal acceptance need not imply equal speedup.

The measured controls in Fig.~\ref{fig:curve} have different accepted-prefix distributions,
segment truncation and proposing costs. Pooled acceptance alone does not identify those quantities,
so Eq.~\ref{eq:S} does not imply a universal speedup curve in that acceptance rate.
For example, the untrained TaskBench draft produces no usable proposal in 38\% of rounds. Those
rounds consume proposing time and commit a target action, although no proposal enters the
accepted/compared ratio. The framework therefore uses per-position rates, boundary loss and
round costs explicitly. Tables~\ref{tab:model} and~\ref{tab:four} retain the measured inputs;
the size and online studies are shown separately in Figs.~\ref{fig:size} and~\ref{fig:online}.

\paragraph{Choosing $k$.}
For negligible tool time, write $\pi_k$ for the fraction of rounds that make a $k$-th proposal.
The proposing cost is $D_k\simeq d\sum_{j\le k}\pi_j$; proposals can stop at segment boundaries.
Write $\Delta\beta_k$ for the growth of
$\beta$ from awaiting one more request. With $B_k = \beta_k + d \sum_{j \le k} \pi_j$ the round cost at
depth $k$, so that $S_k = c_k / B_k$, a $k$-th proposal pays off if and only if
\begin{equation}
 c_k-c_{k-1}>S_{k-1}(B_k-B_{k-1}).
 \label{eq:main:depth}
\end{equation}
By Eq.~\ref{eq:c}, the gain $c_k - c_{k-1}$ is $q_k$ minus the
growth of $e$, plus any change in the earlier hit rates, which move because depth changes where rounds
start. If end effects are negligible and increasing depth leaves the earlier hit rates and proposal probabilities unchanged, the condition simplifies to
\begin{equation}
  q_k \;>\; S_{k-1}\,\bigl(\pi_k\, d + \Delta\beta_k\bigr),
  \qquad\text{or, with } \pi_k = 1 \text{ and } \Delta\beta_k = 0,\qquad
  q_k \;>\; S_{k-1}\, d .
  \label{eq:k}
\end{equation}
\begin{corollary}[depth with stable earlier-position rates]
\label{cor:k}
Assume negligible end effects and unchanged earlier-position hit rates and proposal probabilities when increasing depth. Adding a $k$-th proposal pays off when its hit probability exceeds the
current speedup times the marginal cost of the proposal: the draft's relative
cost, paid only in the rounds that reach the position, plus the extra round-wait
inflation. Near the end of a segment an accepted extra proposal can replace the bonus step instead of
adding one, so on short tasks the rule overstates the value of depth. No proposal is made past the segment
end ($\pi_k = 0$ for $k > L$); when $\Delta\beta_k=0$, setting $\pi_k=1$ upper-bounds the marginal drafting cost; this need not be conservative if verification wait also increases.
\end{corollary}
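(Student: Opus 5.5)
The plan is to derive Corollary~\ref{cor:k} directly from the exact depth criterion Eq.~\ref{eq:main:depth}, so the proof reduces to bookkeeping on $c_k$ and $B_k$. First, since tool time is negligible, the speedup at depth $k$ is $S_k=c_k/B_k$ by Eq.~\ref{eq:S}. Because $B_k,B_{k-1}>0$, the inequality $S_k>S_{k-1}$ is equivalent to $c_k>S_{k-1}B_k$. Substituting $S_{k-1}B_{k-1}=c_{k-1}$ gives $c_k-c_{k-1}>S_{k-1}(B_k-B_{k-1})$, which is Eq.~\ref{eq:main:depth}. This cross-multiplication step is routine.

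Next, I will evaluate both differences under the hypotheses. For the gain, Eq.~\ref{eq:c} gives $c_k-c_{k-1}=q_k+\sum_{j<k}(q_j^{(k)}-q_j^{(k-1)})-(e_k-e_{k-1})$. The assumption that earlier-position hit rates are unchanged removes the sum, and negligible end effects remove the $e$ term, leaving $q_k$. For the cost, $B_k-B_{k-1}=(\beta_k-\beta_{k-1})+d\bigl(\sum_{j\le k}\pi_j^{(k)}-\sum_{j<k}\pi_j^{(k-1)}\bigr)$. The assumption of unchanged proposal probabilities collapses the bracket to $\pi_k$, and by definition the first term is $\Delta\beta_k$. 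Substituting both gives Eq.~\ref{eq:k}, $q_k>S_{k-1}(\pi_kd+\Delta\beta_k)$, which is the verbal statement: the hit probability exceeds the current speedup times the marginal cost (drafting paid in a fraction $\pi_k$ of rounds, plus the extra wait).

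The remaining clauses need separate remarks. For the segment-end caveat, I will note that in a round whose segment ends exactly after $k$ accepted proposals, the $k$-th acceptance converts what would have been a bonus commit into a proposal commit. So $e_k-e_{k-1}>0$ in that case, and dropping it overstates the gain. For $k>L$, no context exists past the segment end, so no proposal is issued and $\pi_k=0$. For the upper bound, since $\pi_k\le1$ and $d\ge0$, we get $\pi_kd\le d$, so with $\Delta\beta_k=0$ the marginal cost satisfies $S_{k-1}\pi_kd\le S_{k-1}d$. Replacing $\pi_k$ by $1$ therefore yields a sufficient-side, stricter test.

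I expect the main obstacle to be this last caveat: stating precisely why the bound is not conservative when $\Delta\beta_k>0$. I would argue it by exhibiting that the true marginal cost $\pi_kd+\Delta\beta_k$ can exceed $d$ whenever $\Delta\beta_k>(1-\pi_k)d$. In that case $q_k>S_{k-1}d$ does not imply Eq.~\ref{eq:k}.
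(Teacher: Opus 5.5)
Your proposal is correct and is essentially the paper's own derivation in the ``Choosing $k$'' paragraph. Both start from the exact criterion Eq.~\ref{eq:main:depth} with $S_k=c_k/B_k$, reduce the gain to $q_k$ through Eq.~\ref{eq:c} and the cost to $\pi_k d+\Delta\beta_k$ through $B_k=\beta_k+d\sum_{j\le k}\pi_j$, and arrive at Eq.~\ref{eq:k}; your cross-multiplication, the bonus-replacement argument for the growth of $e$, and the condition $\Delta\beta_k>(1-\pi_k)d$ for non-conservativeness only spell out steps the paper states without detail.
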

Using this approximation as a depth heuristic requires offline hit rates, draft latency, and an estimate of the change in verification wait. The following examples illustrate the heuristic rather than establish reliable configuration ranking. The planning hit rates are from the $k=5$ runs; the speedups compare separate depth runs. On OpenAGI
($d = 0.06$) the simplified threshold is about $0.09$ and selects $k = 4$
($q_3 = 0.31$, $q_4 = 0.14$, $q_5 = 0.08$); with $\pi_4 = 0.41$, $\pi_5 = 0.15$
and $\Delta\beta \approx 0.015$ per position the thresholds fall to $0.06$ and
$0.04$, and the rule predicts gains of a few hundredths up to $k = 5$; the measured speedup rises from
1.34\xx{} at $k{=}1$ to 1.51\xx{} at $k{=}3$ and 1.57\xx{} at $k{=}5$ (Table~\ref{tab:model}), the last
step within the bootstrap interval. On TaskBench ($d = 0.07$, threshold about $0.10$; $q_2 = 0.37$,
$q_3 = 0.22$, $q_4 = 0.16$, $q_5 = 0.06$) it selects $k = 4$, and the measured speedup rises
from 1.27\xx{} at $k{=}1$ to 1.37\xx{} at $k{=}3$ and 1.41\xx{} at $k{=}5$
(Table~\ref{tab:model}), the last step again within noise. On BFCL (threshold $0.02$, $q_2 = 0.07$)
it selects $k = 2$, and the measured speedup is 1.15, 1.20 and 1.19\xx{} at $k = 1$, 2 and 3; on $\tau^2$-bench (threshold $0.02$, $q_2 = 0.35$, $q_3 = 0.27$) it predicts that $k = 3$ gains further,
and the measured speedup rises from 1.58\xx{} at $k{=}2$ to 1.79\xx{} at $k{=}3$
(Table~\ref{tab:model}).

\paragraph{Assumptions.}
The framework assumes that (a) both arms do the same work. Every committed action is
target-authored, but a speculative trajectory can differ from the target-only one
(App.~\ref{app:success}); per run, the speculative arm commits between 3.9\% fewer and
2.5\% more steps than the sequential arm in the configurations of Table~\ref{tab:model};
(b) concurrent verification on a dedicated GPU is close to free: on the Qwen3 target we measure
$+1$ to $+4\%$ per generated token with up to four concurrent requests and
unchanged generation length, so $\beta$ is a max-of-$n$ effect rather than a
throughput effect (on the Gemma planning configurations of Table~\ref{tab:four}, measured $\beta$
exceeds the offline-hit replay by 0.05--0.13, Sec.~\ref{sec:validation}); (c) tool time is small relative to generation on the
benchmarks we use (simulated backends, $\le 0.07$\,s per step); we exercise the
$t_{\mathrm{tool}}$ terms by adding 2 or 5\,s to every executed tool step on $\tau^2$-bench and BFCL
(Table~\ref{tab:tool}); (d) proposing and
verifying do not overlap, as in Sec.~\ref{sec:loop}; a draft that ran ahead of
the target could hide part of the proposing cost $D$, which we leave to future work; (e) the
harness cancels discarded verifications (Sec.~\ref{sec:loop}), and we report the
residual idle time separately.

%% file: sec_results_v2.tex
\subsection{Speedup across four benchmarks}
\label{app:detail:results4}

\begin{table}[t]
\centering
\caption{\textbf{Whole-task end-to-end speedup of LEAP: four benchmarks, three draft sizes, two targets} (one H100
shared by target and draft; $\tau^2$-bench and BFCL execute tools; 63, 60, 40 and 60 test tasks). Each benchmark
serves its three drafts in one configuration (App.~\ref{app:serving}). Agreement is the draft's offline exact-match rate (\%) on the
target's held-out contexts (BFCL: decision level); acceptance is the share of verified proposals that were
accepted in the runs (the per-proposal rate of Table~\ref{tab:agree}); $c$ is the measured number of committed steps per round (Eq.~\ref{eq:c}); the 95\% interval is a
task-level bootstrap of the pooled ratio (10{,}000 resamples). Bold marks the fastest of the three drafts for
a benchmark and target. Task success per arm shows no systematic change (App.~\ref{app:success}).}
\label{tab:four}
\footnotesize\setlength{\tabcolsep}{4pt}\renewcommand{\arraystretch}{0.92}
\begin{tabular}{llrrrrrrrc}
\toprule
benchmark & draft & steps & $t_{\mathrm{gen}}$ (s) & $k$ & agreem. & accept. & $c$ & speedup & 95\% CI \\
\midrule
\multicolumn{10}{l}{\emph{Target Qwen3-32B}} \\
\midrule
\multirow{3}{*}{OpenAGI} & 0.6B & 4.2  & 1.1  & 3 & 75.4 & 0.75 & 2.05 & 1.50\xx{} & [1.36, 1.65] \\
                & 1.7B & 4.2  & 1.1  & 3 & 77.6 & 0.78 & 2.18 & \textbf{1.55\xx{}} & [1.43, 1.68] \\
                & 4B   & 4.2  & 1.1  & 3 & 78.7 & 0.79 & 2.26 & 1.46\xx{} & [1.30, 1.62] \\
\cmidrule(lr){2-10}
\multirow{3}{*}{TaskBench} & 0.6B & 5.0  & 1.8  & 3 & 74.7 & 0.70 & 1.94 & 1.38\xx{} & [1.24, 1.55] \\
                & 1.7B & 5.0  & 1.8  & 3 & 81.0 & 0.76 & 2.07 & \textbf{1.47\xx{}} & [1.34, 1.61] \\
                & 4B   & 5.0  & 1.8  & 3 & 80.5 & 0.75 & 2.11 & 1.36\xx{} & [1.22, 1.52] \\
\cmidrule(lr){2-10}
\multirow{3}{*}{$\tau^2$-bench} & 0.6B & 13.9 & 12.4 & 2 & 63.3 & 0.65 & 1.92 & 1.61\xx{} & [1.44, 1.79] \\
                & 1.7B & 13.9 & 12.5 & 2 & 68.1 & 0.63 & 1.88 & \textbf{1.63\xx{}} & [1.46, 1.81] \\
                & 4B   & 13.9 & 12.4 & 2 & 69.5 & 0.66 & 1.90 & 1.56\xx{} & [1.43, 1.73] \\
\cmidrule(lr){2-10}
\multirow{3}{*}{BFCL multi-turn} & 0.6B & 10.1 & 10.8 & 2 & 66.7 & 0.53 & 1.44 & 1.18\xx{} & [1.08, 1.28] \\
                & 1.7B & 10.1 & 10.8 & 2 & 72.7 & 0.59 & 1.55 & 1.28\xx{} & [1.20, 1.36] \\
                & 4B   & 10.1 & 10.8 & 2 & 79.8 & 0.65 & 1.67 & \textbf{1.34\xx{}} & [1.21, 1.48] \\
\midrule
\multicolumn{10}{l}{\emph{Target Gemma-4-31B-it}} \\
\midrule
\multirow{3}{*}{OpenAGI} & 0.6B & 4.3  & 2.1 & 3 & 79.8 & 0.78 & 2.18 & \textbf{1.55\xx{}} & [1.35, 1.79] \\
                & 1.7B & 4.3  & 2.1 & 3 & 77.3 & 0.75 & 2.08 & \textbf{1.55\xx{}} & [1.35, 1.78] \\
                & 4B   & 4.3  & 2.1 & 3 & 79.9 & 0.76 & 2.15 & 1.52\xx{} & [1.34, 1.73] \\
\cmidrule(lr){2-10}
\multirow{3}{*}{TaskBench} & 0.6B & 4.9  & 2.3 & 3 & 80.1 & 0.82 & 2.30 & 1.60\xx{} & [1.47, 1.76] \\
                & 1.7B & 4.9  & 2.3 & 3 & 84.6 & 0.83 & 2.43 & \textbf{1.68\xx{}} & [1.56, 1.82] \\
                & 4B   & 4.9  & 2.3 & 3 & 84.2 & 0.85 & 2.45 & 1.62\xx{} & [1.51, 1.75] \\
\cmidrule(lr){2-10}
\multirow{3}{*}{$\tau^2$-bench} & 0.6B & 14.7 & 4.7 & 2 & 72.8 & 0.80 & 2.27 & \textbf{1.63\xx{}} & [1.50, 1.77] \\
                & 1.7B & 14.7 & 4.7 & 2 & 76.4 & 0.79 & 2.20 & 1.55\xx{} & [1.44, 1.67] \\
                & 4B   & 14.7 & 4.7 & 2 & 77.6 & 0.80 & 2.24 & 1.55\xx{} & [1.43, 1.67] \\
\cmidrule(lr){2-10}
\multirow{3}{*}{BFCL multi-turn} & 0.6B & 10.2 & 3.3 & 2 & 63.6 & 0.50 & 1.41 & 1.10\xx{} & [1.04, 1.16] \\
                & 1.7B & 10.2 & 3.3 & 2 & 69.7 & 0.55 & 1.48 & 1.12\xx{} & [1.06, 1.18] \\
                & 4B   & 10.2 & 3.3 & 2 & 74.7 & 0.66 & 1.66 & \textbf{1.16\xx{}} & [1.09, 1.23] \\
\bottomrule
\end{tabular}
\end{table}

Table~\ref{tab:four} summarizes the four benchmarks for drafts of 0.6B, 1.7B and 4B parameters
trained with one recipe, with Qwen3-32B and with Gemma-4-31B-it as the target. The Gemma rows repeat the
whole pipeline: new sequential traces on the same task splits, the same three drafts trained with the same
recipe on those traces, and the same serving configurations (App.~\ref{app:serving} gives the two serving
details that differ).

With the smallest draft, trained for GPU-minutes on the target's own traces, LEAP speeds up whole tasks by
1.18--1.61\xx{} with Qwen3 and 1.10--1.63\xx{} with Gemma, on planning-only tasks and on tasks that execute
tools. Task success shows no systematic change: across the 28 comparisons of App.~\ref{app:success}, the
speculative arm solves more tasks in 8, fewer in 8 and as many in 12, never differing by more than three.
Every committed action is target-authored, but whole trajectories can differ from the target-only run: they
are identical on 96\% of the OpenAGI and 83\% of the TaskBench tasks, while on the tool benchmarks, whose
steps reason for 10--12\,s, most trajectories take a different decision at some step (Table~\ref{tab:parity}).
On $\tau^2$-bench 37 of the 40 tasks are still scored identically by both arms with Qwen3, and with Gemma the
two arms follow identical trajectories on 52--55 of the 60--63 planning tasks. Repeating the $\tau^2$-bench main run (the
configuration of Table~\ref{tab:model}) end to end gives 1.59\xx{} [1.44, 1.77] against 1.58\xx{}
[1.44, 1.75] in the first pass.

The model of Sec.~\ref{sec:model} explains the ordering across benchmarks and the differences between the
targets. OpenAGI has the highest acceptance and $k{=}3$, so it commits the most steps per round
($c{=}2.05$ with Qwen3), but it also pays the largest end-of-task loss ($e{=}0.38$; tasks are 4 steps
long). $\tau^2$-bench has lower acceptance but long tasks ($e{=}0.11$) and a draft that is cheap next to a
12\,s target step ($k\,t_{\mathrm{draft}}/t_{\mathrm{gen}} = 0.03$). BFCL is limited by structure: its
segments between user turns average 2.6 steps, 26\% of its rounds are width-1 hand-off probes at turn
boundaries, and its second proposal is accepted in only 7\% of rounds, so $c$ is 1.44 and the ceiling
$\min(k{+}1, L)/\beta$ is about 1.7\xx{} even for a perfect draft. Changing the target changes the step
length, and with it the weight of the draft's cost. Gemma's steps are about twice as long as Qwen3's on the
planning benchmarks (2.1 and 2.3\,s against 1.1 and 1.8\,s), which lowers the draft term at a fixed draft
cost, and its drafts agree with it more often (77--85\% against 75--81\%), so its planning speedups are
higher (1.52--1.68\xx{}). Its tool steps are shorter: 2.6 times on $\tau^2$-bench (4.7 against 12.4\,s) and
three times on BFCL (3.3 against 10.8\,s). There the draft term rises from 0.03--0.04 to 0.08--0.12 and
from 0.04--0.07 to 0.11--0.19: on $\tau^2$-bench, with acceptance flat across sizes (0.79--0.80), the
smallest draft is the fastest, and on BFCL the speedups fall to 1.10--1.16\xx{} while the turn structure
that bounds the ceiling is unchanged.

The draft size follows the same logic. With Qwen3 the 1.7B draft is faster on every benchmark
(1.28--1.63\xx{}) and the 4B draft is faster still on BFCL alone. With Gemma the three drafts tie on
OpenAGI (1.55, 1.55 and 1.52\xx{}), where the ratio of committed steps between sizes is 1.00 against a cost
ratio of 1.01; the 1.7B draft is fastest on TaskBench and the 4B draft on BFCL, as with Qwen3.
Sec.~\ref{sec:training} takes this up.

In the $\tau^2$-bench main run of Table~\ref{tab:model}, two tasks end early in the speculative arm (the
re-measurement of Table~\ref{tab:four} has no such case: 40 of 40 terminate). On one context that six
test tasks share, the target, verifying three prefixes in one batch, ran to its
2{,}048-token budget without emitting an action in two of them, while the same
context parsed in the sequential arm and in the other four tasks. We attribute this to the
target's batch-dependent decoding (App.~\ref{app:success}); the commit audit passes on both tasks. The two tasks
are kept in the pooled ratio (their speculative arm is shorter), and excluding
them gives 1.58\xx{} [1.44, 1.75] as well.

\subsection{Framework validation}
\label{app:detail:validation}\label{app:validation}

We test Eq.~\ref{eq:S} as a predictor. For each configuration we take
$c$ from the measured per-position hit rates (Eq.~\ref{eq:c}), replay the
\emph{sequential} arm's per-step generation times to obtain $\beta$ (group
consecutive steps of a segment into rounds with independent hits at rate $q_1$
and take the maximum within each group), and measure $t_{\mathrm{draft}}$. The
prediction uses no timing from the speculative arm's verifications; the hit rates and the draft time are
measured in the speculative run.
Table~\ref{tab:model}, Table~\ref{tab:tool} and Fig.~\ref{fig:pred}(a) compare it with the measured
pooled speedup for twenty-one configurations spanning acceptance $0.02$--$0.79$, depth
$k \in \{1,2,3,5\}$, normalized proposing cost $D \in [0.02, 0.74]$ and added
tool latency up to $0.46$ of a target step.

\begin{figure}[tbp]
\centering
\includegraphics[width=\linewidth]{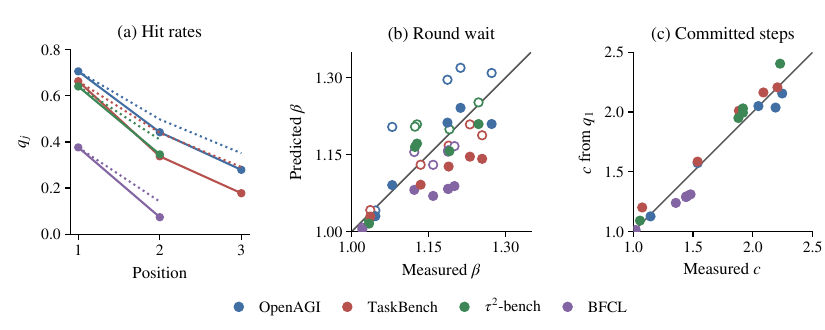}
\caption{\textbf{Component checks of the latency framework.} (a) Measured per-position hit rates (solid)
and $q_1^j$ (dotted) for the four main configurations. (b) Trace-replay $\beta$ (filled) and the
independent-step closed form (hollow), against measured values. (c) Committed steps from the
single-rate replay against measured commits. Configuration details are in Table~\ref{tab:model}.}
\label{fig:components}
\end{figure}

\begin{table}[t]
\centering
\footnotesize
\caption{\textbf{Predicted vs.\ measured speedup.} $q_j$: measured
per-position hit rates; $c$: committed steps per round; $\beta$: round-wait
inflation, predicted from the sequential arm's step times and measured from
the speculative arm; draft: the measured draft time per round divided by the
mean sequential step time (the $D$ term of Eq.~\ref{eq:S}, with rounds that
stop proposing early counted as run). ``i.i.d.''
replaces the measured hit rates by $q_j = q_1^{\,j}$. Every row was run with
cancellation, so the harness never waits on already-discarded verifications
(Sec.~\ref{sec:model}, assumption (e)) and the measured column is the speedup as run.
Rows with other $k$ exercise the depth terms of the model; Table~\ref{tab:tool} exercises its tool
terms. The 32B-direct rows use the target
itself, with reasoning disabled, as the draft; the untrained rows use the 0.6B base without an adapter.
A dash marks a position beyond the row's $k$.}
\label{tab:model}
\setlength{\tabcolsep}{3.2pt}
\resizebox{\linewidth}{!}{%
\begin{tabular}{llrrrrrrrrrrr}
\toprule
 & & & \multicolumn{3}{c}{hit rates} & & \multicolumn{2}{c}{$\beta$} & & \multicolumn{2}{c}{predicted} & \\
\cmidrule(lr){4-6}\cmidrule(lr){8-9}\cmidrule(lr){11-12}
benchmark & draft & $k$ & $q_1$ & $q_2$ & $q_3$ & $c$ & pred. & meas. & draft & i.i.d. & Eq.~\ref{eq:S} & measured \\
\midrule
OpenAGI   & trained 0.6B   & 1 & 0.74 & --   & --   & 1.54 & 1.09 & 1.08 & 0.08 & 1.35 & 1.32 & 1.34 \\
OpenAGI   & trained 0.6B   & 3 & 0.71 & 0.44 & 0.28 & 2.05 & 1.21 & 1.19 & 0.18 & 1.48 & 1.47 & 1.51 \\
OpenAGI   & trained 0.6B   & 5 & 0.70 & 0.47 & 0.31 & 2.25 & 1.24 & 1.21 & 0.22 & 1.47 & 1.54 & 1.57 \\
OpenAGI   & untrained 0.6B & 3 & 0.15 & 0.00 & 0.00 & 1.14 & 1.03 & 1.05 & 0.13 & 0.97 & 0.98 & 0.98 \\
OpenAGI   & 32B-direct     & 3 & 0.70 & 0.50 & 0.33 & 2.19 & 1.21 & 1.27 & 0.61 & 1.12 & 1.20 & 1.16 \\
\midrule
TaskBench & trained 0.6B   & 1 & 0.72 & --   & --   & 1.54 & 1.09 & 1.13 & 0.08 & 1.36 & 1.32 & 1.27 \\
TaskBench & trained 0.6B   & 3 & 0.66 & 0.34 & 0.18 & 1.88 & 1.13 & 1.19 & 0.16 & 1.56 & 1.46 & 1.37 \\
TaskBench & trained 0.6B   & 5 & 0.67 & 0.37 & 0.22 & 2.09 & 1.14 & 1.25 & 0.21 & 1.60 & 1.55 & 1.41 \\
TaskBench & untrained 0.6B & 3 & 0.21 & 0.04 & 0.00 & 1.07 & 1.03 & 1.04 & 0.07 & 1.10 & 0.98 & 0.98 \\
TaskBench & 32B-direct     & 3 & 0.74 & 0.50 & 0.33 & 2.21 & 1.15 & 1.23 & 0.73 & 1.17 & 1.17 & 1.13 \\
\midrule
$\tau^2$-bench & trained 0.6B & 2 & 0.64 & 0.35 & -- & 1.88 & 1.16 & 1.19 & 0.03 & 1.64 & 1.58 & 1.58 \\
$\tau^2$-bench & trained 0.6B & 3 & 0.71 & 0.37 & 0.27 & 2.23 & 1.21 & 1.25 & 0.04 & 1.92 & 1.78 & 1.79 \\
$\tau^2$-bench & untrained 0.6B & 2 & 0.09 & 0.00 & -- & 1.06 & 1.02 & 1.03 & 0.02 & 1.06 & 1.02 & 1.03 \\
\midrule
BFCL      & trained 0.6B   & 1 & 0.36 & --   & --   & 1.36 & 1.07 & 1.16 & 0.02 & 1.13 & 1.24 & 1.15 \\
BFCL      & trained 0.6B   & 2 & 0.38 & 0.07 & --   & 1.45 & 1.08 & 1.19 & 0.04 & 1.16 & 1.30 & 1.20 \\
BFCL      & trained 0.6B   & 3 & 0.39 & 0.07 & 0.02 & 1.48 & 1.09 & 1.20 & 0.04 & 1.16 & 1.31 & 1.19 \\
BFCL      & untrained 0.6B & 2 & 0.02 & 0.00 & --   & 1.02 & 1.00 & 1.02 & 0.05 & 0.96 & 0.97 & 0.95 \\
\bottomrule
\end{tabular}}
\end{table}

Three observations. First, the prediction is within $0.05$ of the measured
speedup for fourteen of the twenty-one configurations and within $0.10$ for all but
TaskBench at $k{=}5$ ($0.14$; harness idle time removed) and BFCL at $k{=}3$ ($0.12$).
This includes the untrained drafts on all four benchmarks (predicted $0.97$--$1.02$, measured
$0.95$--$1.03$). Second, $\beta$ replayed from sequential step times lands within
$0.07$ of the value measured in the speculative arm on fifteen configurations.
It is $0.08$ below it for the 32B-direct draft on TaskBench, $0.09$--$0.11$ below it on BFCL at
every depth without added latency and at $+2$\,s, whose step times are heavy-tailed (coefficient of
variation 1.07 against 0.3--0.5 elsewhere), and $0.11$ below it on TaskBench at $k{=}5$, where
a round issues up to six verification requests but the target server runs at most four at once
(Table~\ref{tab:serving}), so the last ones wait, which the replay does not model;
the closed form of Sec.~\ref{sec:model} is within $0.13$ on all twenty-one
(Fig.~\ref{fig:components}(b)). The largest speedup errors, TaskBench at $k{=}5$ and
BFCL, are where $\beta$ is under-predicted, which supports assumption (b) only
up to the few percent of concurrency cost the replay does not model. Third, the
i.i.d.\ shortcut $q_j = q_1^{\,j}$ is off by up to $0.18$ in $c$ in either
direction (Fig.~\ref{fig:components}(c)): it over-predicts on TaskBench and $\tau^2$-bench,
where rounds often run into the end of the task so that later positions do not
exist, and under-predicts on BFCL,
where a quarter of the rounds are hand-off probes with no proposal; the paper
therefore reports $q_j$ rather than a single acceptance rate.

\paragraph{Tool latency.}

Adding 2 or 5\,s to every executed tool step, in both arms, exercises the tool terms of Eq.~\ref{eq:S}
(Table~\ref{tab:tool}; 0.16--0.46 of a target step). The speedup falls as tools slow down: from 1.58\xx{} to
1.41\xx{} and 1.20\xx{} on $\tau^2$-bench (40 tasks), and from 1.20\xx{} to 1.07\xx{} and 1.02\xx{} on BFCL
(60 tasks). The hit rates and $c$ barely change, so the loss is in the round cost. The proposals' tools run
while proposing, which raises the draft term from 0.03 to 0.31 and 0.74 on $\tau^2$-bench and from 0.04 to
0.23 and 0.50 on BFCL, and the round pays the delay again when the target executes its own action, while the
sequential arm pays it once per step. For the four points with added latency, Eq.~\ref{eq:S} is within 0.08
of the measured speedup, and within 0.02 with the measured $\beta$; on BFCL it counts only the steps that
execute a tool and only the target's own tool steps (App.~\ref{app:predictor}). Task success, sequential against
speculative, is 22 against 22 and 22 against 27 on $\tau^2$-bench, and 34 against 34 and 34 against 33 on BFCL.

\paragraph{Prediction from offline hit rates.}
Table~\ref{tab:model} takes the hit rates from the run it decomposes. To predict a configuration before
running it, we replay the loop along the held-out sequential trajectories with the draft's teacher-forced
predictions of Table~\ref{tab:offline}: a round proposes up to $k$ actions and stops at a predicted
terminal action, the accepted prefix is the run of consecutive matches, and the round commits $A{+}1$
steps ($A$ when the prefix ends the task). Table~\ref{tab:exante} compares the replayed $q_j$ and $c$
with the live values and feeds them, with the trace-replay $\beta$ and the draft's single-stream
latency measured offline on the same held-out contexts (draft served alone, one request at a time;
0.07 of a target step on the planning benchmarks and 0.014 on $\tau^2$-bench), into Eq.~\ref{eq:S}. On OpenAGI
and $\tau^2$-bench the replay reproduces the live hit rates within 0.05 and predicts the speedup within 0.03;
on $\tau^2$-bench the hit rates come from the 14 validation tasks, sequential timing comes from the 40 test tasks, and draft latency is probed offline. On
TaskBench the replay overestimates $q_1$ (0.73 against 0.66) and the speedup (1.56 against 1.37): the
target's live trajectories depart from its sequential trace on
more tasks there (trajectory parity 0.86 against 0.98 on OpenAGI), and the draft is less accurate on the
contexts the target visits only in the live runs. An ex-ante prediction is therefore accurate to the extent that
the target is trajectory-stable under batched decoding; Proposition~\ref{prop:commit} fixes the
teacher-forced distribution only for the trajectory the target actually takes. Run on the Gemma traces
with offline draft-latency probes for 0.6B and measured per-proposal draft latency from the speculative runs for 1.7B and 4B (Table~\ref{tab:prediction_inputs}), the prediction forecasts 1.64--1.77\xx{} against 1.52--1.68\xx{} measured
on the six planning configurations, high by 0.02--0.21, and 1.60--1.62\xx{} against 1.55--1.63\xx{} on
$\tau^2$-bench, where the replay's lower hit rate on the validation tasks (0.64--0.72 against 0.76--0.77
live) offsets the same $\beta$ gap. That gap is the residual on the planning benchmarks: the trace replay
gives 1.16--1.23 where the measured round-wait inflation is 1.23--1.31. It underestimates
verification wait by 0.05--0.13 across these configurations. This gap does not identify a uniform
per-request slowdown, because the awaited contexts and generation lengths can also change.
For Qwen3 the difference is $-0.03$ to $+0.11$ across Table~\ref{tab:model}; the underlying cause
of the Gemma gap is not established.

\begin{table}[t]
\centering
\caption{\textbf{Ex-ante inputs from teacher-forced predictions.} Hit rates and committed steps per
round from replaying the loop on the held-out sequential trajectories with the offline predictions of
Table~\ref{tab:offline} (``replay''), against the live values of the runs of Table~\ref{tab:model}; the
ex-ante speedup uses the replayed values with the trace-replay $\beta$ and the draft's single-stream
latency measured offline on the held-out contexts. $\tau^2$-bench obtains hit rates from the 14 validation tasks and sequential timing from the 40 test tasks. BFCL is omitted: its
rounds are bounded by user turns with a terminal probe, which the replay does not reproduce. A dash marks
a position beyond $k{=}2$.}
\label{tab:exante}
\small
\begin{tabular}{lrrrrrrrrrr}
\toprule
 & \multicolumn{4}{c}{replay, before the run} & \multicolumn{4}{c}{live, in the run} & \multicolumn{2}{c}{speedup} \\
\cmidrule(lr){2-5}\cmidrule(lr){6-9}\cmidrule(lr){10-11}
benchmark & $q_1$ & $q_2$ & $q_3$ & $c$ & $q_1$ & $q_2$ & $q_3$ & $c$ & ex-ante & measured \\
\midrule
OpenAGI        & 0.70 & 0.45 & 0.26 & 2.03 & 0.71 & 0.44 & 0.28 & 2.05 & 1.48\xx{} & 1.51\xx{} \\
TaskBench      & 0.73 & 0.42 & 0.24 & 2.04 & 0.66 & 0.34 & 0.18 & 1.88 & 1.56\xx{} & 1.37\xx{} \\
$\tau^2$-bench & 0.59 & 0.34 & --   & 1.85 & 0.64 & 0.35 & --   & 1.88 & 1.59\xx{} & 1.58\xx{} \\
\bottomrule
\end{tabular}
\end{table}

\paragraph{Segment truncation.}
Appendix~\ref{app:length} buckets the tasks of the main runs by length. The truncation loss $e$ of
Eq.~\ref{eq:c} falls with task length as the equation says (from 0.52 to 0.30 on OpenAGI, from 0.20 to 0.07
on $\tau^2$-bench). The per-bucket speedup does not fall with it, because the draft is most often right on the
short tasks; a prediction that ignores this correlation misses by up to 0.4 in the extreme buckets, while
the run-level prediction is within 0.10.

\paragraph{Self-speculation loses on cost, not on acceptance.}
\label{app:selfspec}
The 32B-direct rows use the target with reasoning disabled as its own draft,
issued as separate requests to the same server: the self-drafting idea
of~\citet{spork,swyr} without the prefix forking and cache sharing of those systems.
Its acceptance is at least as high as the trained 0.6B draft's (0.78 vs.\
0.75 on OpenAGI, 0.79 vs.\ 0.68 on TaskBench) and it commits more steps per
round, yet its speedup is 1.16\xx{} and 1.13\xx{}: each proposal costs
$3.3$--$4.0\times$ as much wall-clock time, which raises the draft term of
Eq.~\ref{eq:S} from $0.16$--$0.18$ to $0.61$--$0.73$, and sharing the target's
server raises $\beta$ (1.19$\to$1.27 on OpenAGI). The trained 0.6B draft
reaches comparable acceptance at about a quarter of the drafting cost. On the
tool benchmarks the reasoning-free target is not even the better predictor: its
offline agreement with the target's reasoning-on decisions is 42.5\% on $\tau^2$-bench
and 62.6\% on BFCL, against 63.7\% and 66.7\% for the trained draft
(Table~\ref{tab:offline}), so self-speculation would lose there on both terms
of Eq.~\ref{eq:S}. The margin narrows with the target's step length, as the draft term predicts: on the
Gemma target of Table~\ref{tab:four}, whose planning steps are twice as long, a self-proposal costs
0.12--0.18 of a step, and self-speculation reaches 1.48\xx{} [1.23, 1.75] on OpenAGI and 1.54\xx{}
[1.46, 1.64] on TaskBench at acceptance 0.90--0.91 (draft term 0.23--0.47), against 1.55\xx{} and
1.60--1.68\xx{} for the trained drafts at a draft term of 0.09--0.23.

\paragraph{What it costs.}
Speculation trades target compute for latency. On $\tau^2$-bench the speculative arm issued 789 target requests
against 559 sequential steps and generated 1.16\xx{} the completion tokens as returned to the client
(a cancelled verification counts the tokens generated up to the cancel; the server aborts it on
disconnect); the draft adds about a tenth of the GPU memory and 3\% of the round time. The trade is
favourable when per-request latency is the constraint (interactive use, evaluation and RL rollouts) and
unfavourable for throughput-bound serving.

\begin{table}[tbp]
\centering\small
\caption{\textbf{Predictions using offline hit rates, both targets.} The eighteen size-study points
in Fig.~\ref{fig:pred}(b), using each draft's offline predictions and the original trace-replay $\beta$.
Serving configurations and measured speedups match Table~\ref{tab:main_results}. Draft timing is
either an offline single-stream probe or measured per-proposal latency from the speculative run
(``run''); the latter rows are conditional on that cost. Sequential timing is from the test trajectories;
$\tau^2$-bench hit rates are from validation trajectories. The last two columns give the trace-replay
and measured $\beta$ plotted in Fig.~\ref{fig:pred}(c).}
\label{tab:prediction_inputs}
\begin{tabular}{lllcrrrr}
\toprule
Target & Benchmark & Draft & Draft timing & Predicted & Measured & $\beta$ replay & $\beta$ meas. \\
\midrule
Qwen3 & OpenAGI & 0.6B & probe & 1.48 & 1.50 & 1.22 & 1.19 \\
Qwen3 & OpenAGI & 1.7B & run & 1.53 & 1.55 & 1.22 & 1.21 \\
Qwen3 & OpenAGI & 4B & run & 1.47 & 1.46 & 1.24 & 1.25 \\
Qwen3 & TaskBench & 0.6B & probe & 1.53 & 1.38 & 1.14 & 1.22 \\
Qwen3 & TaskBench & 1.7B & run & 1.58 & 1.47 & 1.16 & 1.21 \\
Qwen3 & TaskBench & 4B & run & 1.53 & 1.36 & 1.16 & 1.26 \\
Qwen3 & $\tau^2$-bench & 0.6B & probe & 1.58 & 1.61 & 1.14 & 1.15 \\
Qwen3 & $\tau^2$-bench & 1.7B & probe & 1.64 & 1.63 & 1.15 & 1.17 \\
Qwen3 & $\tau^2$-bench & 4B & probe & 1.62 & 1.56 & 1.16 & 1.16 \\
Gemma & OpenAGI & 0.6B & probe & 1.77 & 1.55 & 1.18 & 1.31 \\
Gemma & OpenAGI & 1.7B & run & 1.66 & 1.55 & 1.16 & 1.23 \\
Gemma & OpenAGI & 4B & run & 1.68 & 1.52 & 1.17 & 1.26 \\
Gemma & TaskBench & 0.6B & probe & 1.68 & 1.60 & 1.20 & 1.29 \\
Gemma & TaskBench & 1.7B & run & 1.74 & 1.68 & 1.23 & 1.28 \\
Gemma & TaskBench & 4B & run & 1.64 & 1.62 & 1.23 & 1.28 \\
Gemma & $\tau^2$-bench & 0.6B & probe & 1.62 & 1.63 & 1.21 & 1.34 \\
Gemma & $\tau^2$-bench & 1.7B & run & 1.61 & 1.55 & 1.25 & 1.34 \\
Gemma & $\tau^2$-bench & 4B & run & 1.60 & 1.55 & 1.24 & 1.35 \\
\bottomrule
\end{tabular}
\end{table}

%% file: sec_training.tex
\subsection{Training the draft: recipe, data, input and size}
\label{app:detail:training}
The main results use one recipe per benchmark (Sec.~\ref{sec:train}). This section varies the recipe
on OpenAGI, where the held-out test set (63 tasks, 263 contexts, scored three times each at temperature
0) is cheap to re-score offline, and then varies the draft size across benchmarks. We split the OpenAGI
test tasks into \emph{seen} (40 tasks whose committed action sequence also occurs in a training task) and
\emph{unseen} (the other 23), because the two respond differently to every knob below.

\begin{table}[t]
\centering
\caption{OpenAGI offline agreement (\%) of 0.6B drafts under recipe variations. Rows marked ``ep\,7''
train for 12 epochs and pick the checkpoint by validation exact match; the Table~\ref{tab:offline} draft
trains for 3 epochs and picks by validation loss (epoch 2). Seen/unseen as defined in the text.}
\label{tab:recipe}
\small
\begin{tabular}{lrrr}
\toprule
draft & overall & seen & unseen \\
\midrule
untrained 0.6B & 12.8 & 11.5 & 14.8 \\
LoRA $r{=}64$, 3 epochs, epoch 2 by validation loss (Table~\ref{tab:offline}) & 75.0 & 82.2 & 64.5 \\
LoRA $r{=}64$, ep\,7 by validation exact match & \textbf{77.1} & \textbf{86.4} & 63.2 \\
LoRA $r{=}64$, epoch 12 & 74.1 & 83.4 & 60.4 \\
full fine-tuning, ep\,7 & 75.2 & 84.7 & 61.0 \\
LoRA $r{=}64$, ep\,7, 50\% of the pairs (535) & 72.0 & 80.5 & 59.4 \\
LoRA $r{=}64$, ep\,7, 25\% of the pairs (270) & 71.5 & 74.7 & \textbf{66.7} \\
\midrule
32B-direct (target, reasoning off) & 76.6 & 76.6 & 76.4 \\
target self-consistency & 97.3 & 98.1 & 96.2 \\
\bottomrule
\end{tabular}
\end{table}

\paragraph{Epochs, capacity and data.}
Table~\ref{tab:recipe} gives the offline agreement. Three findings. (i)~Training longer raises agreement
on seen sequences only: from epoch 2 to epoch 7 the seen score rises from 82.2 to 86.4 while the unseen
score falls from 64.5 to 63.2, and epoch 12 loses on both. (ii)~Full fine-tuning of the 0.6B model does
not beat the rank-64 adapter (75.2 against 77.1). (iii)~Halving or quartering the training pairs lowers
the overall score by five to six points, almost entirely on seen tasks; the unseen score is not monotone in the
amount of data. The two untrained references are flat across the split (32B-direct 76.6/76.4, target
self-consistency 98.1/96.2), so the seen/unseen gap is a property of the trained draft: most of what more
epochs and more data buy is familiarity with training sequences, and the unseen level of 63--67 is not
moved by these knobs. End to end the difference is small. At $k{=}3$ the epoch-7 draft reaches
1.476\xx{} against 1.455\xx{} for the epoch-2 draft, and an independent repeat of the epoch-2 run gave
1.477\xx{}; the live acceptance splits the same way (epoch 2: 82.5\% on seen and 67.0\% on unseen
tasks, 1.53\xx{} and 1.35\xx{}; epoch 7: 86.5\% and 65.0\%, 1.63\xx{} and 1.28\xx{}).

\paragraph{Draft input.}
The draft sees the task and the committed actions. Adding the target's reasoning text for the committed
steps to the draft's input raises the unseen score by at most 2.5 points (64.5 to 67.0, with the epoch-4
checkpoint) at equal overall agreement. In the live runs, however, the draft proposes positions $2..k$ before the
target has produced the reasoning for its own proposals, so training must drop the trailing one or two
thoughts; with that dropout the unseen score falls to 58--60. We therefore keep the action-only input.

\paragraph{Transfer across benchmarks.}
A trained draft is specific to its benchmark (Table~\ref{tab:transfer} in App.~\ref{app:transfer}). On each
of the other three benchmarks, the four trained drafts agree with the target about as often as the untrained
base does, within about four points. The only exception is between the two planning benchmarks: the TaskBench
draft reaches 23.3\% on OpenAGI and the OpenAGI draft 29.6\% on TaskBench, against 13.3\% and 25.3\% for the
base. This is still far below the 75\% of a draft trained on the benchmark itself. Each workload therefore
needs its own draft; training one takes minutes.

\paragraph{Draft size.}
The three drafts of Table~\ref{tab:four} bound the size axis: 0.6B is the smallest base we train and 4B the
largest that shares one H100 with the 32B target at the benchmarks' context lengths. Table~\ref{tab:size} and
Fig.~\ref{fig:size} show what varies. Offline agreement mostly rises with size
(0.6B to 4B: 75.4 to 78.7 on OpenAGI, 74.7 to 80.5 on TaskBench, where 1.7B already reaches 81.0, 63.3 to 69.5 on
$\tau^2$-bench, 66.7 to 79.8 on BFCL), and the draft's cost always does: its per-round term $D$ (Eq.~\ref{eq:S}) grows from 0.16--0.17 to
0.28--0.29 on the planning benchmarks and from 0.03--0.04 to 0.04--0.07 on the tool benchmarks, and sharing
the GPU with the 4B draft raises $\beta$ by up to 0.06. The speedup is the balance of the two. On OpenAGI and
TaskBench, where a target step takes 1--2\,s, 1.7B is the fastest draft (1.55 and 1.47\xx{} against 1.50 and
1.38\xx{} for 0.6B) and 4B falls back to the 0.6B level (1.46 and 1.36\xx{}) although it agrees most; two
independent runs of every cell give the same ordering, and Eq.~\ref{eq:S} predicts it from the sequential
trace (1.48, 1.53, 1.47 on OpenAGI; 1.53, 1.58, 1.53 on TaskBench), with TaskBench
overpredicted by 0.11--0.17. These predictions replay each draft's offline hits using the inputs in
Table~\ref{tab:prediction_inputs}; TaskBench's replayed $\beta$ is 0.05--0.10 below the measured one.
On $\tau^2$-bench, where a target step takes 12\,s and $D$ is 0.03, predicted speedups are 1.58,
1.64 and 1.62; the three drafts measure 1.61, 1.63 and 1.56\xx{}, within one another's bootstrap intervals,
with the 4B draft's calls 1.55\xx{} as long as the 0.6B draft's. On BFCL, where the 0.6B draft agrees least
and a target step takes 11\,s, every size step pays: 1.18, 1.28 and 1.34\xx{} for agreement 66.7, 72.7 and
79.8. The bootstrap intervals of neighbouring cells overlap (OpenAGI 1.7B [1.43, 1.68] against 4B [1.30,
1.62]; BFCL 1.7B [1.20, 1.36] against 4B [1.21, 1.48]), so the ordering rests on its consistency across
runs and benchmarks and on the model rather than on any single pair.

A larger draft pays exactly when its extra agreement outweighs its extra cost, and Eq.~\ref{eq:S} says
where that is: with $c$ the committed steps per round and $\beta + D$ the round cost, draft $B$ beats draft
$A$ when $c_B/c_A > (\beta_B + D_B)/(\beta_A + D_A)$. With the measured values, 4B against 0.6B has a cost
ratio of 1.12 on both planning benchmarks against a gain in $c$ of 1.09--1.10, and on BFCL a cost ratio of
1.03 against a gain of 1.16; on $\tau^2$-bench both ratios are within 0.02 of one, and the three speedups are equal
within noise. The best size therefore depends on the target step. With 1--2\,s steps a mid-size draft is
fastest; with 10--12\,s steps the draft's cost weighs little, and the larger drafts are faster on BFCL and
within noise on $\tau^2$-bench. Training cost grows with size (Table~\ref{tab:size}). The 0.6B draft, trained in
about a minute on OpenAGI, is within 0.1\xx{} of the fastest size in seven of the eight benchmark--target
pairs of Table~\ref{tab:four}, which is why it is our default and the draft of the model validation
(Sec.~\ref{sec:validation}).
For practitioners, we recommend starting with a trained 0.6B draft and $k{=}1$.

\begin{table}[t]
\centering
\caption{\textbf{Draft size.} Three drafts trained with the same recipe. Offline agreement (\%) on the
held-out contexts of Table~\ref{tab:offline}; the measured per-round draft term $D$ of Eq.~\ref{eq:S} in the
common serving configurations of Table~\ref{tab:four}; the speedup Eq.~\ref{eq:S} predicts from the
sequential trace and each draft's own offline hit predictions, with the draft timing sources
listed in Table~\ref{tab:prediction_inputs}. BFCL has no offline-hit prediction (Fig.~\ref{fig:size}).
Measured speedups are in Table~\ref{tab:four}.}
\label{tab:size}
\footnotesize\setlength{\tabcolsep}{3pt}
\resizebox{\linewidth}{!}{%
\resizebox{\linewidth}{!}{%
\begin{tabular}{lrrrrrrrrrrr}
\toprule
 & \multicolumn{4}{c}{offline agreement} & \multicolumn{4}{c}{draft term $D$} & \multicolumn{3}{c}{predicted speedup} \\
\cmidrule(lr){2-5}\cmidrule(lr){6-9}\cmidrule(lr){10-12}
draft & OpenAGI & TaskBench & $\tau^2$-bench & BFCL & OpenAGI & TaskBench & $\tau^2$-bench & BFCL & OpenAGI & TaskBench & $\tau^2$-bench \\
\midrule
0.6B & 75.4 & 74.7 & 63.3 & 66.7 & 0.17 & 0.16 & 0.03 & 0.04 & 1.48\xx{} & 1.53\xx{} & 1.58\xx{} \\
1.7B & 77.6 & \textbf{81.0} & 68.1 & 72.7 & 0.20 & 0.19 & 0.03 & 0.04 & \textbf{1.53\xx{}} & \textbf{1.58\xx{}} & \textbf{1.64\xx{}} \\
4B   & \textbf{78.7} & 80.5 & \textbf{69.5} & \textbf{79.8} & 0.28 & 0.29 & 0.04 & 0.07 & 1.47\xx{} & 1.53\xx{} & 1.62\xx{} \\
\bottomrule
\end{tabular}}}
\end{table}

\paragraph{Objective.}
Sec.~\ref{sec:train} argues that cross-entropy on the target's committed actions is a natural
objective. Table~\ref{tab:objective} lists the comparison that tests alternatives on the two benchmarks with
ground truth, TaskBench and BFCL: supervision from gold labels instead of the target's actions (does the
draft need to imitate the target rather than be correct?) and reinforcement learning from the exact-match
reward starting from the SFT draft. Offline agreement is measured for every variant; the end-to-end speedup
only where the offline gain over SFT exceeds three points; no variant reaches it. On BFCL the gold-label draft is trained with the
same recipe on the benchmark's ground-truth calls replayed in the real environment, one call per decision
(130 tasks, 1{,}200 pairs, 39\% of them turn ends; the target's traces give 1{,}254 pairs, 37\% turn ends).
It reaches 50.5\% decision agreement against 66.7\%: on the 63 held-out decisions where the target calls
a tool it ends the turn 26 times and matches the target's tool group 14 times (22\%), against 3 and
31 (49\%) for the draft trained on the target's own actions; on the 51 single-call decisions alone it matches
14 against 29. Correctness by the benchmark's standard and agreement with the target are different
objectives. The comparison changes the source of the training data, not only its labels: the gold contexts
follow the gold calls, and on BFCL the gold data has one call per decision where the target groups calls.
On TaskBench the gold-label draft reaches 69.9\% against 75.2\%. The gap is smaller there because
the annotated tool sequences often coincide with the target's: they are identical on 61 of the 228 training
tasks and use the same tools, with other argument strings, on 108 more. Reinforcement learning from the exact-match reward, run with GRPO from the SFT draft on the same
1{,}011 training contexts (eight samples per context, reward 1 for the target's action and 0.1 for a call of
the right tool with other arguments, KL to the SFT draft, two epochs, 48 GPU-minutes), raises TaskBench
agreement from 75.2\% to 77.1\%: nine of the 277 held-out contexts change from miss to hit and four from hit
to miss, a net gain that is not significant at this size and that Eq.~\ref{eq:S} values at $+0.04\times$
(replayed $c$ 2.04$\to$2.13), below the three points at which we run the system. The sampled reward on the
training contexts rises from 0.78 to 0.85 while the greedy output changes on 13 contexts, consistent with the reward
sharpening the SFT draft's distribution around actions it already ranks first. Started from the untrained base
instead (25.0\% agreement, 64\% of outputs parsable), the same GRPO reaches 75.9\% after three epochs,
indistinguishable from the SFT draft (12 contexts gained, 10 lost) and 1.2 points below GRPO from it, at
72 GPU-minutes against 4 for SFT: in this setting the reward adds nothing over cross-entropy on the
target's actions, which reaches the same agreement at a fraction of the cost. On BFCL, GRPO from the SFT draft
(reward 1 for the target's exact tool group or turn end, 1{,}254 contexts, two epochs) gains nothing either:
65.7\% decision agreement against 66.7\%, with the exact tool group up from 49.2\% to 50.8\% and turn ends
recalled less often (91.7\% against 97.2\%). This is consistent
with Sec.~\ref{sec:train}: only a proposal made on a prefix the target itself has committed can ever
be committed, so the contexts that decide acceptance are the target's own, and the teacher-forced pairs
already come from them. The tests show no significant gain from GRPO at 12 to 18 times the training cost on
TaskBench; they do not rule out gains from other objectives or larger budgets. An earlier pilot on
other benchmarks found the same one to two points on structured tool use and a loss on open-vocabulary actions.

\subsection{Online training of the draft while the agent runs}
\label{app:detail:online}
The drafts above were trained after a separate collection pass: we ran the target alone over the training
tasks and logged its actions. That pass is not needed. By Proposition~\ref{prop:commit}, every action the
speculative system commits was generated by the target, whatever the draft proposed. So the commits of the
running system are the same supervision, and the draft can be trained from them between batches of tasks.

We test this from a cold start. The untrained 0.6B draft starts the speculative system on the benchmark's
training tasks, in batches (OpenAGI: 250 tasks in 10 batches of 25; $\tau^2$-bench: 60 tasks in 6 batches of 10).
After each batch we train a fresh adapter on all commits so far with the recipe of Table~\ref{tab:recipe},
on a second GPU, and load it into the running draft server without a restart. Batch $b$ therefore runs with
the adapter trained on the batches before it. Each adapter is scored for offline agreement on the held-out
contexts of Table~\ref{tab:offline}, and the final adapter is measured end to end on the test tasks as in
Table~\ref{tab:four}.

\begin{figure}[t]
\centering
\includegraphics[width=\linewidth]{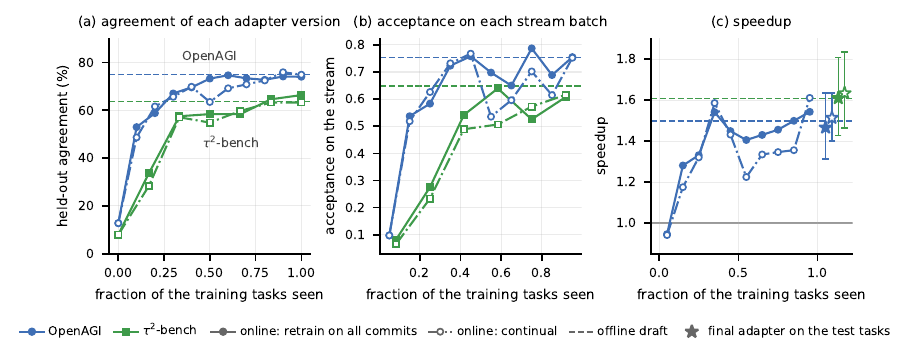}
\caption{\textbf{Online training of the draft.} The draft is trained online, from the commits of the running
speculative system: the untrained 0.6B draft starts the system on the training tasks, and after every
batch a new adapter replaces the previous one in the draft server without a restart. Colour marks the
benchmark. Solid lines retrain the adapter on all commits so far; dash-dot lines with hollow markers
continue the previous adapter (continual). (a) Offline agreement of each adapter version on the held-out
contexts. (b) Acceptance on each batch of the stream. (c) Speedup on each OpenAGI batch and, as stars with
bootstrap intervals, the final adapters on the test tasks. Dashed: the offline drafts
(Tables~\ref{tab:offline} and~\ref{tab:four}); grey line: no speedup.}
\label{fig:online_stream}
\end{figure}

Figures~\ref{fig:online} and~\ref{fig:online_stream} show the result. On OpenAGI the draft agrees on 53\% of the held-out contexts
after 25 tasks, 67\% after 75, and 74.0\% after all 250 (offline draft: 75.0\%). The speedup on the stream
follows: 0.95\xx{} on the first batch with the untrained draft, 1.28\xx{} on the second, and 1.4--1.5\xx{}
from the fourth batch on. The final adapter gives 1.47\xx{} [1.31, 1.63] on the test tasks, against
1.51\xx{} [1.38, 1.64] for the offline draft under the same serving configuration. On $\tau^2$-bench, with tools executed, agreement on the validation
contexts goes from 8.0\% to 66.4\% (offline draft: 63.7\%), acceptance on the stream reaches the offline
draft's level after 30 tasks, and the final adapter gives 1.61\xx{} [1.43, 1.81] on the test tasks, the same
as the offline draft's 1.61\xx{} [1.44, 1.79]. The buffer ends with 1,079 pairs on OpenAGI, against 1,090 in
the offline training set, because the committed trajectories are the target's own.

Retraining on the whole buffer costs more with every batch (27 to 73 seconds per version on OpenAGI, 3 to
18 minutes on $\tau^2$-bench). A continual variant instead continues the previous adapter on the new batch plus an
equal-size sample of earlier pairs, for two epochs, at a fixed 22 seconds per update on OpenAGI. It reaches
the same endpoint (74.9\% agreement, 1.51\xx{} [1.40, 1.64] on the test tasks) but dips once on the way
(63.5\% and 1.22\xx{} at the fifth batch) before recovering. On $\tau^2$-bench it takes 3 to 6 minutes per update
instead of 3 to 18, ends at 63.3\% agreement (offline draft: 63.7\%), and gives 1.63\xx{} [1.46, 1.83] on the
test tasks. Appendix~\ref{app:online} gives the details.

%% file: sec_appendix.tex
\section{Benchmark implementation details}
\label{app:bench}

All four benchmarks share the same loop (Sec.~\ref{sec:loop}) and the same
acceptance test: a proposal is accepted if its canonical string equals the
canonical string of the target's action on the same context. Canonical strings
are produced by the benchmark's parser from the raw response; a response that
does not parse (no action, several actions where one is required, an unknown
tool, or arguments that violate the tool schema) never matches. The target
decodes greedily (temperature 0, seed 0) in every end-to-end run. Table
\ref{tab:bench} lists the per-benchmark settings; the paragraphs below give the
formats.

\begin{table}[h]
\centering
\footnotesize
\setlength{\tabcolsep}{4pt}
\caption{Per-benchmark settings of the end-to-end runs.}
\label{tab:bench}
\resizebox{\linewidth}{!}{%
\begin{tabular}{>{\raggedright\arraybackslash}p{2.5cm}>{\raggedright\arraybackslash}p{2.9cm}>{\raggedright\arraybackslash}p{3.1cm}>{\raggedright\arraybackslash}p{3.0cm}>{\raggedright\arraybackslash}p{3.3cm}}
\toprule
 & OpenAGI & TaskBench & $\tau^2$-bench & BFCL multi-turn \\
\midrule
task type & tool-selection plan & tool calls with args & ticket-driven support & multi-turn function calling \\
tools & 14 $+$ TERMINATE & DailyLife APIs & 44 & per-task catalog \\
target format & ReAct, \texttt{<tool>} tag with tool name & ReAct, \texttt{<tool>} tag with name(\{json\}) & single-call JSON, guided & \texttt{<tool\_call>} tags, ordered group \\
target reasoning & thought $\le 512$ tok. & thought $\le 512$ tok. & on, budget 2{,}048 & on, budget 4{,}096 \\
target action budget & 32 tok. & 128 tok. & (in budget) & (in budget) \\
draft format & direct, tool name & direct, name$+$args & grammar, JSON call & grammar, call group / \texttt{<respond/>} \\
draft budget & 32 tok. & 128 tok. & 512 tok. & 512 tok. \\
observation & none (plan-first) & none (plan-first) & tool result (local backend) & tool result (cloned runtime) \\
segment boundary & task end & task end & task end (\texttt{done}) & user turn \\
$k$ & 3 & 3 & 2 & 2 \\
max steps & 12 & 12 & 60 & 20 per turn \\
train / test tasks & 250 / 63 & 228 / 60 & 60 (+14 val) / 40 & 130 / 60 \\
training pairs & 1{,}090 & 1{,}120 & 932 & 1{,}254 \\
\bottomrule
\end{tabular}}
\end{table}

\paragraph{OpenAGI.}
The agent selects, one step at a time, which of 14 domain-expert tools to
apply to a task description~\citep{openagi}, and emits \texttt{TERMINATE} when
the plan is complete. The target runs ReAct: a thought of at most 512 tokens
followed by an action of at most 32 tokens in the form
\texttt{<tool>Image Denoising</tool>}; the parser takes the last tag pair so
that a preamble naming other tools does not win over the committed action. The
canonical action is the tool name. Steps do not observe tool outputs, so the
context of step $i$ is the task and the committed actions $a_{<i}$ (the target's
thoughts are not part of the context and are not shown to the draft). A task
ends at \texttt{TERMINATE} or after 12 steps. The 313 tasks are split by task
into 250 training and 63 test tasks (seed 42); running the target once over the
training tasks yields 1{,}090 (context, action) pairs, and the test tasks give
263 held-out contexts for the offline agreement in Table~\ref{tab:offline}. The
draft receives the same context rendered as a direct prompt and answers with
the action tag only.

\paragraph{TaskBench.}
We use the DailyLife API subset of TaskBench~\citep{taskbench}, where each
step is a tool call with JSON
arguments, or \texttt{TERMINATE}; for example\par\noindent\hspace*{1.5em}%
\texttt{<tool>apply\_for\_passport(\{"country":"Germany"\})</tool>}\par\noindent The 300 tasks were split 240 / 60 by task before any
model call. The target runs ReAct as on OpenAGI (thought $\le 512$ tokens, action
$\le 128$ tokens). The parser is strict: the response must contain exactly one
action, the tool must exist, the argument names must equal the schema's
parameter set, and every argument value must be a string; anything else is a
parse or schema failure and cannot match. The canonical action is
\texttt{name(\{sorted-key JSON\})}. The training set is derived
deterministically from the target's raw responses on the training tasks
(228 tasks with a valid trajectory, 1{,}120 pairs; 1{,}011 / 109 split by task
for validation); the same derivation on the test tasks gives 57 tasks / 277
contexts for offline agreement, and all 60 test tasks are run end to end. No
test task's tool sequence occurs in training.

\paragraph{$\tau^2$-bench (telecom domain).}
We use the telecom domain of $\tau^2$-bench~\citep{tau2bench} in its
\emph{solo} configuration: there is no simulated user; the agent works from the
task ticket, a policy document, and 44 tools whose implementations run against
the benchmark's local state. We compact the policy text (same rules, shorter
wording) so that contexts fit the serving window; the compact policy is
identical for both arms and both models. The target receives the tool schemas
natively and must emit exactly one call per step as a JSON object
\texttt{\{"name":\,\ldots,"arguments":\,\ldots\}}, constrained by a
\texttt{oneOf} schema over the 44 tools (guided decoding), with reasoning
enabled and a 2{,}048-token budget; a task ends with the \texttt{done} call or
after 60 steps, and success is the benchmark's own reward function
(\texttt{evaluate\_simulation}, solo mode). The canonical action is the JSON
object with sorted keys. Training and validation pairs were collected from the
target at its default sampling for this mode (temperature 0.6, top-$p$ 0.95,
top-$k$ 20, seed 0); end-to-end runs use temperature 0. The draft sees the same
context rendered as a direct prompt and decodes a \texttt{<tool>\{json\}</tool>}
block under a grammar derived from the same schemas (512-token budget, no
reasoning). Each proposal is executed in a cloned environment branch so that
the next proposal sees its observation; a rejected branch is discarded.

\paragraph{BFCL multi-turn.}
We use the multi-turn split of the Berkeley Function Calling
Leaderboard~\citep{bfcl}, 130 training and 60 test tasks.
Each task is a conversation; in each user turn the agent issues one or more
ordered groups of function calls and ends the turn with a natural-language
reply. The target emits calls as \texttt{<tool\_call>\{json\}</tool\_call>}
tags (several tags form one ordered group, which is one decision), with
reasoning enabled, a 4{,}096-token budget, and at most 20 decisions per turn.
The draft predicts the next decision as
\texttt{<tool>[\{call\},\ldots]</tool>} or the literal \texttt{<respond/>},
decoding under a GBNF grammar built from the tools visible in the context
(512-token budget, no reasoning); it never predicts reply text. A decision
matches when the ordered call group is identical after canonicalization (JSON
with sorted keys) or when both sides end the turn. Proposals execute in a
cloned tool runtime (deep-copied instances); the committed branch's state is
the one the next turn sees. User turns are segment boundaries: at the end of
each turn a width-1 probe lets the target confirm the turn end. Official
success is computed by the official BFCL evaluator on the final state and
responses. Verification uses prefix-ordered cancellation
(App.~\ref{app:serving}); the 60 test tasks were run twice end to end, one task
at a time. On the 10-task development split, run twice at each depth before
the test runs, the speculative arm was faster at $k{=}1$ and $k{=}2$
(1.14/1.23\xx{} and 1.18/1.24\xx{}) but completed fewer tasks (9 against 14 at
$k{=}1$, 13 against 14 at $k{=}2$); we selected $k{=}2$ and report the test
split, where success was higher in both passes (Table~\ref{tab:four}).

\section{Trace-replay predictor and idle-time accounting}
\label{app:predictor}

\paragraph{Inputs.}
The predictor of Sec.~\ref{sec:validation} uses (i) the sequential arm's
per-step generation times, grouped by segment; (ii) the per-position hit rates
$q_j$; and (iii) the mean draft time per round from the speculative arm's draft
records. It uses no timing from the speculative arm's verifications.

\paragraph{Replay.}
For every segment of every task, the replay walks the sequential steps in
order. At the start of a round with $r$ steps remaining it draws the number of
leading hits $A$ from independent Bernoulli trials with rate $q_1$, capped at
$\min(k, r-1)$; the round commits $A{+}1$ steps and its wait is the maximum of
those steps' sequential generation times. The predicted $\beta$ is the sum of
round waits divided by the number of rounds times $t_{\mathrm{gen}}$, averaged
over 50 replays. The commits per round $c$ in Table~\ref{tab:model} is the
measured value (Eq.~\ref{eq:c} with the measured $q_j$ and $e$); the
i.i.d.\ column uses the replay's own $A$ instead. Table~\ref{tab:qj} lists the
measured $q_j$ and $e$ for the nine configurations.

\begin{table}[h]
\centering
\small
\caption{Measured per-position hit rates $q_j = \Pr[A \ge j]$, end-of-segment
loss $e$, and committed steps per round $c = 1 + \sum_j q_j - e$. A dash marks a position beyond the row's $k$.}
\label{tab:qj}
\begin{tabular}{llrrrrrrrr}
\toprule
benchmark & draft & $k$ & $q_1$ & $q_2$ & $q_3$ & $q_4$ & $q_5$ & $e$ & $c$ \\
\midrule
OpenAGI   & trained 0.6B   & 3 & 0.71 & 0.44 & 0.28 & --   & --   & 0.38 & 2.05 \\
OpenAGI   & trained 0.6B   & 1 & 0.74 & --   & --   & --   & --   & 0.20 & 1.54 \\
OpenAGI   & trained 0.6B   & 5 & 0.70 & 0.47 & 0.31 & 0.14 & 0.08 & 0.46 & 2.25 \\
OpenAGI   & untrained 0.6B & 3 & 0.15 & 0.00 & 0.00 & --   & --   & 0.01 & 1.14 \\
OpenAGI   & 32B-direct     & 3 & 0.70 & 0.50 & 0.33 & --   & --   & 0.34 & 2.19 \\
TaskBench & trained 0.6B   & 3 & 0.66 & 0.34 & 0.18 & --   & --   & 0.30 & 1.88 \\
TaskBench & 32B-direct     & 3 & 0.74 & 0.50 & 0.33 & --   & --   & 0.36 & 2.21 \\
$\tau^2$-bench   & trained 0.6B   & 2 & 0.64 & 0.35 & --   & --   & --   & 0.11 & 1.88 \\
BFCL      & trained 0.6B   & 2 & 0.38 & 0.07 & --   & --   & --   & 0.00 & 1.45 \\
\midrule
\multicolumn{10}{l}{\emph{Draft-size study, common serving configuration (Sec.~\ref{sec:training})}} \\
OpenAGI   & trained 0.6B   & 3 & 0.69 & 0.44 & 0.27 & --   & --   & 0.37 & 2.04 \\
OpenAGI   & trained 1.7B   & 3 & 0.74 & 0.50 & 0.33 & --   & --   & 0.37 & 2.20 \\
OpenAGI   & trained 4B     & 3 & 0.79 & 0.50 & 0.31 & --   & --   & 0.34 & 2.26 \\
TaskBench & trained 0.6B   & 3 & 0.67 & 0.37 & 0.20 & --   & --   & 0.30 & 1.94 \\
TaskBench & trained 1.7B   & 3 & 0.69 & 0.42 & 0.28 & --   & --   & 0.34 & 2.05 \\
TaskBench & trained 4B     & 3 & 0.71 & 0.43 & 0.27 & --   & --   & 0.31 & 2.10 \\
\bottomrule
\end{tabular}
\end{table}

\paragraph{Tool terms.}
With a delay $\delta$ added to every executed tool step and $\tau = \delta / t_{\mathrm{gen}}$, the sequential
arm pays $\tau$ (in target steps) on every step that executes a tool. A round pays it for the proposals' tools,
which run while proposing and are part of the measured draft term $D$, and for every tool step the target
itself executes. Eq.~\ref{eq:S} becomes $S = c\,(1 + f\tau) / (\beta + D + g\tau)$, with $f$ the share of
sequential steps that execute a tool and $g$ the number of tool steps per round that the target executes. On
$\tau^2$-bench every step but the final one executes a tool and each round ends with one target action, and we
use $f = g = 1$. On BFCL, turn ends execute no tool ($f = 0.62$), accepted proposals are adopted rather than
executed again, so only the target's corrections and bonus actions count ($g = 0.44$--$0.45$), and $D$ is the
rollout time per round, which contains the proposals' tool steps.

\paragraph{Decomposition of the speculative arm.}
For each round we record the wall-clock of every target verification and the
round's total verification wall-clock. The speculative arm's task time then
decomposes as
\[
  \text{spec} \;=\; \sum_{\text{rounds}} \max_{\text{used}} \text{wall}
  \;+\; \text{idle} \;+\; \text{draft} \;+\; \text{residual},
\]
where ``used'' are the verifications whose result the round consumed, idle is
$\sum_{\text{rounds}} (\text{verification wall} - \max_{\text{used}}
\text{wall})$, i.e.\ time spent waiting for verifications that were already
discarded, draft is the summed draft time, and residual covers tool execution
and bookkeeping. With prefix-ordered cancellation the idle time is 0\,s of
199\,s on OpenAGI, 0\,s of 400\,s on TaskBench, 3\,s of 4{,}367\,s on $\tau^2$-bench
and 14\,s of 11{,}132\,s on BFCL, and at most 4\,s in every other row of Table~\ref{tab:model}.
The earlier gather-mode runs waited for discarded verifications: 9\,s of 205\,s (OpenAGI), 12\,s of
391\,s (TaskBench) and 552\,s of 4{,}882\,s ($\tau^2$-bench) for the trained draft, 9\,s of 267\,s and 16\,s
of 481\,s for 32B-direct on OpenAGI and TaskBench, and 242\,s of 544\,s for the untrained draft.
Re-running every row with cancellation moved the main speedups from 1.46, 1.37 and 1.41\xx{} to 1.51, 1.37
and 1.58\xx{}, the untrained draft from 0.55\xx{} to 0.98\xx{}, 32B-direct from 1.12 and 1.11\xx{} to 1.16
and 1.13\xx{}, and OpenAGI at $k{=}1$ and $k{=}5$ from 1.30 and 1.48\xx{} to 1.34 and 1.57\xx{}. Residual
is at most 83\,s (BFCL). ``No idle'' in Table~\ref{tab:model} is
$\text{seq} / (\text{spec} - \text{idle})$.

\paragraph{Concurrency cost.}
To check assumption (b) of Sec.~\ref{sec:model} we compare milliseconds per
generated token in the sequential arm with the same quantity over the used
verifications of the speculative arm: 25.5 vs.\ 26.7 (OpenAGI), 25.5 vs.\ 26.3
(TaskBench), 26.0 vs.\ 26.7 ($\tau^2$-bench), 26.1 vs.\ 26.4 (BFCL). Tokens per step
are unchanged on three benchmarks and 5\% lower on $\tau^2$-bench.

\section{Serving configuration}
\label{app:serving}

All runs use vLLM on one NVIDIA H100 (80\,GB). The target is
Qwen3-32B (bf16, tensor parallel 1) and the draft is Qwen3-0.6B (bf16) with a
LoRA adapter of rank 64
served through vLLM's LoRA support (\texttt{--enable-lora},
\texttt{--max-lora-rank 64}); both servers run on the same GPU with the memory
fractions in Table~\ref{tab:serving}. Prefix caching is enabled on both
servers, and the cache is reset through the development endpoint
\texttt{/reset\_prefix\_cache} before each arm so that neither arm reuses the
other's KV cache. Requests are OpenAI-compatible chat completions with
temperature 0 and seed 0; $\tau^2$-bench and BFCL use the \texttt{xgrammar} guided
decoding backend and the \texttt{qwen3} reasoning parser.

\begin{table}[h]
\centering
\small
\caption{vLLM server settings per benchmark (target / draft on the same H100). A dash marks a setting
not used for that benchmark. On TaskBench at $k{=}5$ (Table~\ref{tab:model}) a round issues up to six
verification requests, so those beyond the fourth wait for a free slot.}
\label{tab:serving}
\begin{tabular}{lcccc}
\toprule
 & OpenAGI & TaskBench & $\tau^2$-bench & BFCL \\
\midrule
max model length (tokens) & 2{,}048 & 4{,}096 & 16{,}384 & 24{,}576 \\
GPU memory fraction, target / draft & 0.82 / 0.10 & 0.85 / 0.10 & 0.88 / 0.08 & 0.88 / 0.06 \\
max concurrent sequences, target / draft & 8 / 2 & 4 / 2 & 4 / 4 & 4 / 1 \\
guided decoding & -- & -- & xgrammar & xgrammar \\
reasoning parser & -- & -- & qwen3 & qwen3 \\
verification & \multicolumn{4}{c}{prefix-ordered, cancel} \\
\bottomrule
\end{tabular}
\end{table}

\paragraph{Common configurations of the size study.}
The rows of Table~\ref{tab:four} serve each benchmark's three drafts in one configuration that fits the 4B
draft beside the target: target / draft memory fractions 0.82 / 0.14 on OpenAGI and TaskBench, 0.83 / 0.13
with a 12{,}288-token context on $\tau^2$-bench, and 0.84 / 0.135 with a 15{,}360-token context on BFCL, all
with prefix-ordered cancellation; the Gemma target uses the same configurations.

\paragraph{The Gemma target.} Gemma-4-31B-it runs with FP8 weights on one H100: in bf16 it leaves a KV cache of
7{,}500 tokens at 85\% of the GPU, too few for $k{+}1$ concurrent verifications beside a draft. Its request
settings are those of the Qwen3 runs. Gemma departs from the prescribed action syntax more often than Qwen3
(Markdown fences, its native tool-call markup, keyword-argument calls on TaskBench, several calls in one
decision on $\tau^2$-bench); a surface-syntax normalization rewrites these forms into the prescribed one
before parsing, is applied identically to collection and to both arms, and every rewrite is counted (33 of
the 949 parsed actions of a TaskBench run).

\paragraph{Verification requests.}
The $k{+}1$ verification contexts of a round are issued as concurrent HTTP
requests to the target server. In the \emph{gather} mode of our earlier runs (no row of Table~\ref{tab:model} uses it), the round waits for all $k{+}1$ responses before
committing; the time spent waiting for responses that the first mismatch has
already made irrelevant is the idle term of App.~\ref{app:predictor}. In the \emph{prefix-ordered} mode used for every run we report, responses are consumed
in prefix order; as soon as context $i$ mismatches, the client closes the connections of
contexts $i{+}1,\dots,k$, which makes vLLM abort those requests, and the round
commits without waiting for them. The mode changes only when the round returns,
not what it commits.

\paragraph{Timing protocol.}
The test tasks of a shard are run first sequentially (target only) and then speculatively
on the same node and the same servers, with a cache reset in between; the
reported speedup is the pooled ratio of the summed whole-task wall-clocks.
OpenAGI was additionally run with prefix caching disabled on both servers as a
control (1.35\xx{} vs.\ 1.46\xx{} with caching, in the earlier gather-mode run). In Table~\ref{tab:four}, the
Qwen3 OpenAGI and TaskBench rows pool two independent passes over the test tasks (both passes of a task are
attached to it in the bootstrap); $\tau^2$-bench is run in two shards of 20 tasks and BFCL in three shards of
20, each sequential then speculative; the Gemma rows are single passes. The main runs of Table~\ref{tab:model}
were repeated once on $\tau^2$-bench and BFCL (App.~\ref{app:success}).

\section{Segment truncation by task length}
\label{app:length}
\begin{table}[h]
\centering
\caption{\textbf{Segment truncation by task length.} Tasks of the main runs bucketed by their sequential
length. $e$ (Eq.~\ref{eq:c}), the hit-rate sum $\sum_j q_j$ and $c$ are measured per bucket; the
prediction replays the bucket's sequential traces with the run-wide hit rates truncated at the task end,
so it carries the truncation effect only. Buckets hold 7--27 tasks, so individual ratios are noisy.}
\label{tab:length}
\small
\begin{tabular}{llrrrrrr}
\toprule
 & & & & & & \multicolumn{2}{c}{speedup} \\
\cmidrule(lr){7-8}
benchmark & steps & tasks & $e$ & $\sum_j q_j$ & $c$ & meas. & pred. \\
\midrule
OpenAGI   & 3        & 15 & 0.52 & 1.66 & 2.14 & 1.54\xx{} & 1.35\xx{} \\
OpenAGI   & 4        & 21 & 0.31 & 1.46 & 2.15 & 1.52\xx{} & 1.39\xx{} \\
OpenAGI   & 5        & 15 & 0.40 & 1.54 & 2.14 & 1.60\xx{} & 1.57\xx{} \\
OpenAGI   & $\ge 6$  &  9 & 0.30 & 1.10 & 1.80 & 1.39\xx{} & 1.55\xx{} \\
\midrule
TaskBench & $\le 4$  & 27 & 0.38 & 1.29 & 1.91 & 1.48\xx{} & 1.37\xx{} \\
TaskBench & 5        & 16 & 0.40 & 1.40 & 2.00 & 1.49\xx{} & 1.52\xx{} \\
TaskBench & 6        &  7 & 0.32 & 1.53 & 2.21 & 1.50\xx{} & 1.50\xx{} \\
TaskBench & $\ge 7$  & 10 & 0.12 & 0.77 & 1.65 & 1.14\xx{} & 1.52\xx{} \\
\midrule
$\tau^2$-bench   & $\le 9$  &  9 & 0.20 & 0.95 & 1.75 & 1.47\xx{} & 1.28\xx{} \\
$\tau^2$-bench   & 10--13   &  9 & 0.12 & 1.08 & 1.96 & 1.65\xx{} & 1.51\xx{} \\
$\tau^2$-bench   & 14--17   & 10 & 0.11 & 0.98 & 1.87 & 1.66\xx{} & 1.70\xx{} \\
$\tau^2$-bench   & $\ge 18$ & 12 & 0.07 & 0.96 & 1.89 & 1.55\xx{} & 1.72\xx{} \\
\bottomrule
\end{tabular}
\end{table}

Table~\ref{tab:length} buckets the tasks of the main runs by length. The truncation loss $e$ behaves as
Eq.~\ref{eq:c} says: it falls from 0.52 on three-step OpenAGI tasks to 0.30 on tasks of six or more
steps, and from 0.20 to 0.07 across $\tau^2$-bench's length range. The per-bucket speedup does not fall with
it, because task length and acceptance are not independent: short OpenAGI tasks are also the ones the
draft predicts best ($\sum_j q_j$ 1.66 against 1.10 on the longest tasks), and the longest TaskBench
tasks are where acceptance collapses ($\sum_j q_j$ 0.77, $c$ 1.65, 1.14\xx{}). A prediction that replays
each bucket with the run-wide hit rates captures the truncation effect but not this correlation, and
misses by up to 0.4 in the extreme buckets, whereas the run-level prediction is within 0.10. The
truncation term is therefore visible in the data, and the benchmarks' short tasks cost less than $e$
alone suggests because the draft is most often right on them.

\section{Transfer across benchmarks}
\label{app:transfer}
Table~\ref{tab:transfer} scores each trained 0.6B draft, and the untrained base, on the held-out contexts of
every benchmark with that benchmark's evaluator (Table~\ref{tab:offline}). All twenty cells come from one
job. The diagonal and the base row were measured again there; they differ from Table~\ref{tab:offline} by at
most 0.5 points, as repeated evaluations of the same draft do (the size study measured 75.4 and 74.7,
Table~\ref{tab:size}).

\begin{table}[h]
\centering
\caption{\textbf{Transfer across benchmarks.} Offline agreement (\%) of each 0.6B draft (rows) on the
held-out contexts of each benchmark (columns), scored as in Table~\ref{tab:offline} (BFCL: decision agreement
on the 99 development contexts).}
\label{tab:transfer}
\small
\begin{tabular}{lrrrr}
\toprule
 & \multicolumn{4}{c}{evaluated on} \\
\cmidrule(lr){2-5}
draft trained on & OpenAGI & TaskBench & $\tau^2$-bench & BFCL \\
\midrule
OpenAGI        & \textbf{75.5} & 29.6 & 8.8 & 3.0 \\
TaskBench      & 23.3 & \textbf{74.7} & 9.3 & 6.1 \\
$\tau^2$-bench & 10.8 & 21.7 & \textbf{63.7} & 1.0 \\
BFCL           & 12.0 & 23.1 & 8.0 & \textbf{66.7} \\
\midrule
untrained base & 13.3 & 25.3 & 8.0 & 2.0 \\
\bottomrule
\end{tabular}
\end{table}

\section{Online training details}
\label{app:online}
\paragraph{Stream and servers.} The training tasks of the benchmark are shuffled once with a fixed seed and
run in batches through the speculative system (target and draft on one GPU, the serving configuration of
App.~\ref{app:serving}). On OpenAGI each batch runs both arms. On $\tau^2$-bench the stream runs the
speculative arm only; the sequential arm on the same training tasks was run separately in the same serving
configuration. Fig.~\ref{fig:online}b reports, for both benchmarks, sequential time per step over speculative time
per committed step on each batch. Training runs on a second GPU while the servers wait between batches. The new adapter is
loaded into the running draft server by name through its runtime adapter endpoint, and the next batch proposes
with it. Every measurement on the stream is prequential: batch $b$ is never seen by the adapter that serves it.

\paragraph{Pairs.} On OpenAGI the commits of a batch are the (context, target action) pairs directly, with
duplicate (task, step) entries removed. $\tau^2$-bench records keep only hashes of the prompts, so the pairs are
rebuilt by replaying the committed actions in the benchmark environment, exactly as the collection pass
builds a step; every rebuilt prompt is checked against the hash the run recorded for the draft's request at
that step (898 pairs, no mismatch).

\paragraph{Recipe.} The adapter of every version is trained from the base with the recipe of
Table~\ref{tab:recipe}. Checkpoint selection uses a task-level 10\% carve of the buffer on OpenAGI and the
frozen 14 validation tasks on $\tau^2$-bench. The $\tau^2$-bench validation pairs were collected with the target sampling at
temperature 0.6, while the commits of the running system come from the verification setting (temperature 0);
the agreement of every $\tau^2$-bench draft in this paper, offline or online, is measured on the same validation
labels, so the curves are comparable, and the end-to-end acceptance is the measure under the verification
setting.

\paragraph{Continual variant.} Each version continues the previous adapter for two epochs on the new batch's
pairs plus an equal-size uniform sample of the earlier pairs, with the learning rate and rank of the recipe;
the first version is a fresh adapter on the first batch. The replay ratio and the number of epochs were not
tuned.

\input{tab_online_ci}

\paragraph{Endpoint.} The final adapter is measured on the test split with the protocol of
Table~\ref{tab:four} ($\tau^2$-bench: two shards of 20 tasks, each sequential then speculative, on separate server
pairs). Table~\ref{tab:online} lists the agreement of every version and the update times.

\begin{table}[h]
\centering
\caption{\textbf{Online training: agreement of each adapter version} (offline exact match on the held-out
contexts, \%). OpenAGI: 263 test contexts, one version per 25 training tasks; $\tau^2$-bench: 226 validation
contexts, one version per 10 tasks, so its 60 training tasks give six versions. Offline drafts: 75.0 (OpenAGI) and 63.7 ($\tau^2$-bench). Update time is the
time from the end of a batch to the loaded adapter (buffer build and training).}
\label{tab:online}
\small
\resizebox{\linewidth}{!}{\begin{tabular}{lrrrrrrrrrrr}
\toprule
version & 1 & 2 & 3 & 4 & 5 & 6 & 7 & 8 & 9 & 10 & update time \\
\midrule
OpenAGI, retrain on buffer & 53.0 & 58.8 & 67.0 & 69.6 & 73.3 & 74.7 & 73.4 & 72.8 & 74.1 & 74.0 & 27--73 s \\
OpenAGI, continual         & 48.7 & 61.6 & 65.7 & 69.8 & 63.5 & 69.2 & 70.8 & 72.5 & 75.9 & 74.9 & 22 s \\
$\tau^2$-bench, retrain on buffer & 33.6 & 57.5 & 58.4 & 58.4 & 64.6 & 66.4 & -- & -- & -- & -- & 3--18 min \\
$\tau^2$-bench, continual         & 28.3 & 57.1 & 54.9 & 59.7 & 63.3 & 63.3 & -- & -- & -- & -- & 3--6 min \\
\bottomrule
\end{tabular}}
\end{table}

\section{Task success and trajectory identity}
\label{app:success}
Table~\ref{tab:success} lists task outcomes for every configuration in Table~\ref{tab:four}. $\tau^2$-bench and BFCL use the benchmark's official success; OpenAGI and TaskBench have no
task-level success metric beyond termination, and the count is the number of runs that terminated. The
Qwen3 OpenAGI and TaskBench counts are over two runs of 63 and 60 tasks. The two passes of the Qwen3
$\tau^2$-bench main run gave 22 and 22 sequential against 21 and 20 speculative successes, and the two passes of the
BFCL main run 34 and 34 against 37 and 35. Across the 12 tool-benchmark size configurations and these four repeat comparisons,
official task success increases in 8 cases, decreases in 5, and is unchanged in 3.
Differences are at most three tasks per comparison. The 12 planning-benchmark comparisons instead
count normal termination: 3 decrease and 9 are unchanged.

Table~\ref{tab:parity} compares whole trajectories. The target run alone is reproducible: every pair of
target-only runs of the same configuration commits identical decisions on every task. Speculative runs are not,
mainly because verification decodes several contexts together and the engine's floating-point results depend on which
requests share a batch. On the planning benchmarks, where a target step takes 1--2\,s, 83--96\% of the
speculative trajectories are identical to the target-only one. On the tool benchmarks, where a step reasons for
10--12\,s, a small numerical difference changes some decision on most tasks, and the trajectory differs from
then on; two speculative runs also differ from each other. Every one of the 938 speculative task runs in
Table~\ref{tab:parity} passed the harness's commit audit, which replays the task and checks that each committed
action is the target's output on the context in which it was committed.

\paragraph{Serialized verification.} To separate batching from other effects, we reran speculation with the
target server limited to one sequence at a time, so that every verification request is decoded alone, as in the
target-only run (TaskBench: 60 tasks, both arms; $\tau^2$-bench: the first 20 test tasks, compared with the
target-only arm of the main run). The limit does not change the target-only run (60 of 60 TaskBench tasks
identical). The speculative trajectories then match the target-only ones on 57 of 60 TaskBench tasks, against 50
of 60 with concurrent verification, and on 8 of 20 $\tau^2$-bench tasks, against 0 of 20; every commit passes the
audit. Batching thus accounts for most of the divergence but not for all of it; we did not identify the other
source within the serving engine.

\begin{table}[!ht]
\centering
\caption{\textbf{Task outcomes per arm} for the configurations of Table~\ref{tab:four}.
Counts denote normal termination on OpenAGI and TaskBench, and official task success on
$\tau^2$-bench and BFCL. The sequential count is the same across the three draft-size comparisons.}
\label{tab:success}
\small
\begin{tabular}{lrrrrrrrr}
\toprule
 & \multicolumn{4}{c}{Qwen3-32B target} & \multicolumn{4}{c}{Gemma-4-31B target} \\
\cmidrule(lr){2-5}\cmidrule(lr){6-9}
 & & \multicolumn{3}{c}{speculative} & & \multicolumn{3}{c}{speculative} \\
\cmidrule(lr){3-5}\cmidrule(lr){7-9}
benchmark & sequential & 0.6B & 1.7B & 4B & sequential & 0.6B & 1.7B & 4B \\
\midrule
OpenAGI         & 126 & 126 & 126 & 126 & 63 & 63 & 63 & 63 \\
TaskBench       & 116 & 114 & 115 & 114 & 60 & 60 & 60 & 60 \\
$\tau^2$-bench  &  22 &  23 &  24 &  21 & 39 & 40 & 40 & 40 \\
BFCL multi-turn &  34 &  34 &  36 &  34 & 48 & 47 & 47 & 48 \\
\bottomrule
\end{tabular}

\vspace{1.5em}
\caption{\textbf{Trajectory identity.} Share of tasks (\%) on which two runs of the same configuration commit
identical decisions, with the free text of replies and summaries ignored; comparing that text as well gives
3.3\% on BFCL and 0\% on $\tau^2$-bench for target-only against speculative. OpenAGI and TaskBench: the six
runs of the draft-size study (three drafts, two repeats); $\tau^2$-bench and BFCL: the two passes of the main
configuration.}
\label{tab:parity}
\small
\begin{tabular}{lrrr}
\toprule
benchmark & two target-only runs & target-only vs.\ speculative & two speculative runs \\
\midrule
OpenAGI        & 100 & 95.5 & 96.5 \\
TaskBench      & 100 & 82.8 & 81.7 \\
$\tau^2$-bench & 100 &  1.3 & 45.0 \\
BFCL           & 100 & 34.2 & 60.0 \\
\bottomrule
\end{tabular}
\end{table}

%% file: tab_online_ci.tex
\begin{table}[tbp]
\centering\small\setlength{\tabcolsep}{8pt}
\caption{\textbf{Final online adapters on the test split, Qwen3 target.} Whole-task inference speedup
and task-bootstrap 95\% intervals, using trained 0.6B drafts with matched offline and online serving settings. Training time is reported separately.}
\label{tab:online_final_ci}
\begin{tabular}{lccc}
\toprule
Benchmark & Offline & Online: retrain & Online: continual \\
\midrule
OpenAGI & \shortstack{1.51\xx{}\\{\fontsize{8}{9}\selectfont [1.38, 1.64]}} & \shortstack{1.47\xx{}\\{\fontsize{8}{9}\selectfont [1.31, 1.63]}} & \shortstack{1.51\xx{}\\{\fontsize{8}{9}\selectfont [1.40, 1.64]}} \\
$\tau^2$-bench & \shortstack{1.61\xx{}\\{\fontsize{8}{9}\selectfont [1.44, 1.79]}} & \shortstack{1.61\xx{}\\{\fontsize{8}{9}\selectfont [1.43, 1.81]}} & \shortstack{1.63\xx{}\\{\fontsize{8}{9}\selectfont [1.46, 1.83]}} \\
\bottomrule
\end{tabular}
\end{table}